\documentclass{article}
\usepackage[final,main]{neurips_2026}
\usepackage{microtype}
\usepackage{graphicx}
\usepackage{subcaption}
\usepackage{booktabs} 
\usepackage{amsmath,amssymb,amsthm}
\newtheorem{theorem}{Theorem}
\newtheorem{lemma}{Lemma}
\newtheorem{proposition}{Proposition}
\newtheorem{assumption}{Assumption}
\newtheorem{remark}{Remark}
\usepackage{hyperref}

\usepackage{amsmath}
\usepackage{amssymb}
\usepackage{mathtools}
\usepackage{amsthm}
\usepackage{enumitem}
\usepackage{booktabs}
\usepackage{graphicx}
\usepackage{multirow} 
\usepackage{makecell}
\usepackage[dvipsnames]{xcolor}
\usepackage{color, colortbl}
\usepackage{placeins}
\usepackage{pifont}
\usepackage{subcaption}
\usepackage{algpseudocode}

\usepackage[utf8]{inputenc} 
\usepackage[T1]{fontenc}    
\usepackage{hyperref}       
\usepackage{url}            
\usepackage{booktabs}       
\usepackage{amsfonts}       
\usepackage{nicefrac}       
\usepackage{microtype}      
\usepackage{xcolor}         
\usepackage{graphicx}
\usepackage{tabularx}
\usepackage{graphicx}
\usepackage{bm}
\usepackage{caption}
\title{DEG: Diffusion-Enhanced GFlowNet for\\ Solving Vehicle Routing Problems}
\author{
Ni Zhang$^1$, 
Zhiqin Zhang$^{1}$, 
Ling Pan$^2$,
Hoong Chuin Lau$^1$,
Zhiguang Cao$^1$\\
$^{1}$Singapore Management University\\
$^{2}$ Hong Kong University of Science and Technology\\
\texttt{ni.zhang.2025@phdcs.smu.edu.sg, zhiqinzhang@smu.edu.sg, lingpan@ust.hk}\\
\texttt{hclau@smu.edu.sg, zgcao@smu.edu.sg}
}
\begin{document}



\maketitle
\begin{abstract}
Traditional neural solvers for solving vehicle routing problems (VRPs) often suffer from limited solution diversity, motivating the development of Generative Flow Network (GFlowNet)–based models. However, the effectiveness of these models is frequently constrained by insufficient flow expansion in high-reward regions, limiting their ability to distribute probability across promising solution routes. The deeper exploration could yield superior results. Diffusion models, in contrast, provide stronger structural guidance for exploration. These two paradigms are naturally complementary: GFlowNet can supply edge-level signals for diffusion to embed, while diffusion can guide broader exploration. Leveraging this synergy, we propose Diffusion-Enhanced GFlowNet (DEG), a novel framework that integrates GFlowNet with a diffusion model to encourage richer flow expansion toward high-reward regions and derive higher-quality solutions. Specifically, DEG exploits GFlowNet’s inherent diversity to generate edge-specific backward signals, applies the stochastic noise schedule of diffusion to perturb these signals, and then denoises them within the GFlowNet paradigm. To further improve scalability, we introduce a specialized decoder capable of dynamically adapting to diverse problem scales. Extensive experimental evaluations on synthetic and real-world datasets, including instances with up to 10,000 nodes, demonstrate that DEG consistently achieves favorable performance compared to baseline methods. The code is available at \href{https://github.com/ZHANG-NI/DEG}{https://github.com/ZHANG-NI/DEG}.
\end{abstract}

\section{Introduction}

The Vehicle Routing Problem (VRP) is a fundamental NP-hard combinatorial optimization problem that generalizes the Traveling Salesman Problem (TSP) by incorporating practical constraints such as vehicle capacity limitations and route feasibility requirements. The objective of VRPs is to minimize overall travel costs by determining optimal routes for a fleet of vehicles dispatched from a central depot to serve multiple customers while satisfying these constraints. VRPs are central to a wide range of applications, including logistics and distribution planning \citep{bochtis2009vehicle,bochtis2010vehicle,cattaruzza2017vehicle,hsiao2018last,konstantakopoulos2022vehicle,leng2021distribution,zhang2026afl}, agricultural field operations \citep{wu2023research,yao2022green,utamima2022agricultural}, and urban delivery systems \citep{carrabs2017exact,gayialis2019vehicle,cirovic2014green}. Owing to their importance, VRPs have attracted extensive research attention. Traditional metaheuristics \citep{braysy2001genetic,baker2003genetic,mohammed2017solving,osman1993metastrategy,chiang1996simulated,kuo2010using,rizzoli2007ant,montemanni2005ant,bell2004ant} have demonstrated effectiveness in solving VRPs. However, they are often computationally expensive and heavily dependent on carefully designed solution components and extensive hyperparameter tuning tailored to specific instances. These limitations have spurred growing interest in deep learning–based neural solvers, which aim to provide data-driven heuristics that improve inference efficiency while reducing reliance on expert knowledge.

Recent neural solvers for VRPs can be broadly categorized into Autoregressive (AR) and Non-Autoregressive (NAR) models. AR models, such as those based on pointer networks and Transformer architectures \citep{kwon2020pomo,luo2023neural,bresson2021transformer,huang2025rethinking,yi2026radar}, have achieved strong solution quality. However, their sequential decoding process introduces significant computational bottlenecks, limiting scalability to large instances. NAR models provide a promising alternative by generating complete solutions in a single forward pass, greatly improving inference efficiency. Early NAR methods, however, struggled with solution quality, motivating the development of Generative Flow Network (GFlowNet) \citep{bengio2021flow,bengio2023gflownet}. While GFlowNet-based solvers \citep{zhangadversarial,kimant,zhang2026hybrid} leverage sampling diversity to improve solution quality, they often converge to sub-optimal route subsets and fail to fully explore high-reward regions, leading to insufficient flow expansion into these more promising areas and hindering the discovery of higher-quality solutions.


Diffusion models~\citep{sohl2015deep, ho2020denoising} recently emerged as a powerful approach in the VRP domain for their strong exploration capability~\citep{sun2023difusco,li2023t2t,li2024fast, Wang2025}. Yet existing diffusion-based methods remain largely confined to the TSP and depend on supervised frameworks to generate edge-level noise signals. While GFlowNet and diffusion models each have distinct strengths and limitations, their integration for VRP solving has not been explored. These two paradigms are naturally complementary: diffusion can guide GFlowNet toward more effective flow expansion in high-reward regions, and GFlowNet can provide edge-level signals and extend the applicability of diffusion methods.

Based on this, we propose Diffusion-Enhanced GFlowNet (DEG), a novel framework that integrates GFlowNet with diffusion models for solving VRPs. DEG leverages the intrinsic diversity of GFlowNet to generate edge-specific backward signals, which serve as a noise basis for the diffusion process. Unlike conventional trajectory-level backward signals that collapse into a single aggregated value, edge-specific signals provide richer structure for diffusion noise modeling. DEG perturbs these signals with a stochastic noise schedule and denoises them within the GFlowNet paradigm, preventing premature convergence to sub-optimal policies and promoting broader exploration of high-reward regions. This mechanism enhances forward probability allocation across promising routes and enables more effective flow expansion, enabling the discovery of higher-quality solutions. To further enhance generalization across varying problem scales, we develop the Graph-Scale Adapter (GSA), a specialized module that aligns spatial statistics across different graph scales and significantly improves scalability. Experimental evaluations demonstrate DEG's strong performance and scalability on both TSP and Capacitated Vehicle Routing Problem (CVRP), achieving competitive results compared to existing baselines and exhibiting robustness on instances containing up to 10,000 nodes. Our contributions are summarized as follows:
\begin{itemize}[leftmargin=*]
    \item We propose Diffusion-Enhanced GFlowNet (DEG), which leverages intrinsic diversity of GFlowNet to generate edge-specific backward signals that serve as the adaptive noise basis for integrating diffusion, then applies the diffusion's noise schedule to inject noise into these signals, and denoises them through the GFlowNet paradigm, enabling deeper flow expansion toward high-reward regions to derive better solutions.
    \item We design the Graph-Scale Adapter (GSA) to align spatial statistics across varying instance sizes, which enhances the model's ability to generalize to large-scale problems.
    \item We conduct comprehensive experiments on TSP and CVRP, and results show that DEG consistently achieves favorable results on both synthetic and real-world datasets.
\end{itemize}

\section{Related Work}
The deep learning based neural solvers for VRPs can be broadly classified into Autoregressive (AR) and Non-Autoregressive (NAR) methods, respectively.

\textbf{AR methods} sequentially construct solutions step-by-step, conditioned on previously selected nodes. Transformer-based architectures have been particularly influential due to their ability to effectively capture long-range dependencies. A notable early example is the Attention Model (AM) \citep{koolattention}, which applies self-attention mechanisms to enable end-to-end routing policy learning. To further enhance solution diversity and robustness, POMO \citep{kwon2020pomo} builds on AM by employing multi-start policy optimization, exploring multiple potential optimal paths simultaneously. Building on the strengths of autoregressive architectures, LEHD \citep{luo2023neural} introduces a light-encoder, heavy-decoder design, achieving substantial performance gains across standard VRP benchmarks.

\textbf{NAR methods}, by contrast, generate complete solutions in a single forward pass, drastically reducing inference time and enabling parallel solution generation. This characteristic makes them particularly appealing for large-scale applications. Diffusion-based models represent a recent and promising class of NAR approaches. DIFUSCO \citep{sun2023difusco} is a pioneering diffusion-based model tailored for the TSP variant of VRPs. Building upon this foundation, T2T \citep{li2023t2t}, Fast T2T \citep{li2024fast} and DEITSP \citep{Wang2025} significantly improve solution quality while retaining inference efficiency. However, all these diffusion-based methods rely on supervised learning with optimal solutions and are mainly limited to TSP. GFlowNet-based solvers form another mainstream approach. GFACS~\citep{kimant} employs GFlowNet to enhance the ACO heuristic. AGFN~\citep{zhangadversarial} adopts an adversarial strategy to generate routes in an end-to-end manner. Different from those methods, in this work, we propose Diffusion-Enhanced GFlowNet (DEG), which leverages diffusion to enhance the exploration of high-reward regions, thereby improving the solution quality for VRPs.

\section{Preliminary}
\textbf{Problems Definition.}
In both the TSP and CVRP, the instance is defined on a complete graph \( \mathcal{G} = (\mathcal{V}, \mathcal{E}) \), where \( \mathcal{V} = \{n_0, n_1, \ldots, n_v\} \) denotes the set of nodes and \( \mathcal{E} \) the set of edges. The cost \( d(n_i, n_j) \) associated with each edge is typically defined as the Euclidean distance between nodes \( n_i \) and \( n_j \). In the TSP, the objective is to find a minimum-cost Hamiltonian circuit that visits all nodes exactly once. In the CVRP, \( n_0 \) represents the depot, and the remaining nodes correspond to customers with non-negative demands \( d_i \). The goal is to construct a set of depot-rooted routes \( \tau = \{q_1, \ldots, q_N\} \) such that each customer is visited exactly once, the total demand on each route does not exceed the vehicle capacity \( C \), and the overall travel cost is minimized.

\textbf{Diffusion Models.} Diffusion models have recently emerged as powerful generative approaches, achieving favorable results in domains such as image synthesis \citep{sohl2015deep, ho2020denoising, rombach2022high,dhariwal2021diffusion}, audio generation \citep{huang2023make,kongdiffwave}, and drug design \citep{schneuing2024structure,xie2022polymeric}. Diffusion models operate by corrupting clean inputs with noise and denoise them to reconstruct the original data. Typically, given a clean input $x_0$, the noise process perturbs it by adding Gaussian noise to obtain a noisy sample $x_t$ at time step $t$, following 
\begin{equation}
q(x_t \mid x_0) = \mathcal{N}(x_t; \sqrt{\alpha_t} x_0, (1 - \alpha_t) \mathbf{I}),
\end{equation}
where $\alpha_t \in (0, 1)$ controls the noise level. The model can capture the structure of the target distribution by extracting meaningful signals from noisy states and recovering the original data. 

\textbf{GFlowNet.} Generative Flow Network (GFlowNet) is a variational inference algorithm that treat sampling from a target probability distribution as a sequential decision-making process~\citep{bengio2021flow, bengio2023gflownet}, and is widely applied to structured generation tasks, including molecule design~\citep{nica2022evaluating, jain2022biological,zhu2023sample}, neural architecture search~\citep{soin2024generative,ikram2024evolution}, and combinatorial optimization~\citep{zhang2023let,zhangrobust,chenorder}. The objective of GFlowNet is to learn a policy that samples objects in proportion to a given reward distribution. From a high-level perspective, a GFlowNet consists of two complementary components: forward sampling policy and backward policy. The forward pass includes the source flow $Z(\mathcal{G}; \theta)$ and a forward policy $P_F(\tau; \theta)$ that governs trajectory construction from initial to terminal states. The backward pass includes the reward function $R(\tau)$ and a backward policy $P_B(\tau)$ that defines reverse transitions. Together, these bidirectional passes form the basis for training via trajectory balance objectives~\citep{malkin2022trajectory}, defined as:
\begin{equation}
\mathcal{L}_\text{TB}(\tau; \theta) = \left( \log \frac{Z(\mathcal{G}; \theta) P_{F}({\tau}; \bm{\theta})}{R(\tau) P_{B}({\tau})} \right)^2.
\label{eq:tb_loss equation}
\end{equation}

\section{Diffusion-Enhanced GFlowNet}
\label{sec:GFlowNet-Enhanced Diffusion (GED)}
We introduce Diffusion-Enhanced GFlowNet (DEG), a framework designed to overcome a core limitation of existing GFlowNet-based VRP solvers, namely, their inability to effectively expand flow toward high-reward regions. DEG integrates GFlowNet with diffusion by leveraging edge-specific backward signals, which serve as a structured noise basis for the diffusion process. A stochastic noise schedule perturbs these signals, which are subsequently denoised within the GFlowNet paradigm, enabling richer exploration and more sufficient flow expansion. To further strengthen scalability and inference quality across varying problem sizes, DEG incorporates a Graph-Scale Adapter (GSA) decoder, which dynamically aligns spatial statistics across different graph scales.

\subsection{Edge-specific Backward Signals and Noise Schedule}
\begin{figure*}[!htb]
    \centering
    \includegraphics[width=\textwidth, trim=0cm 10cm 1cm 0cm, clip]{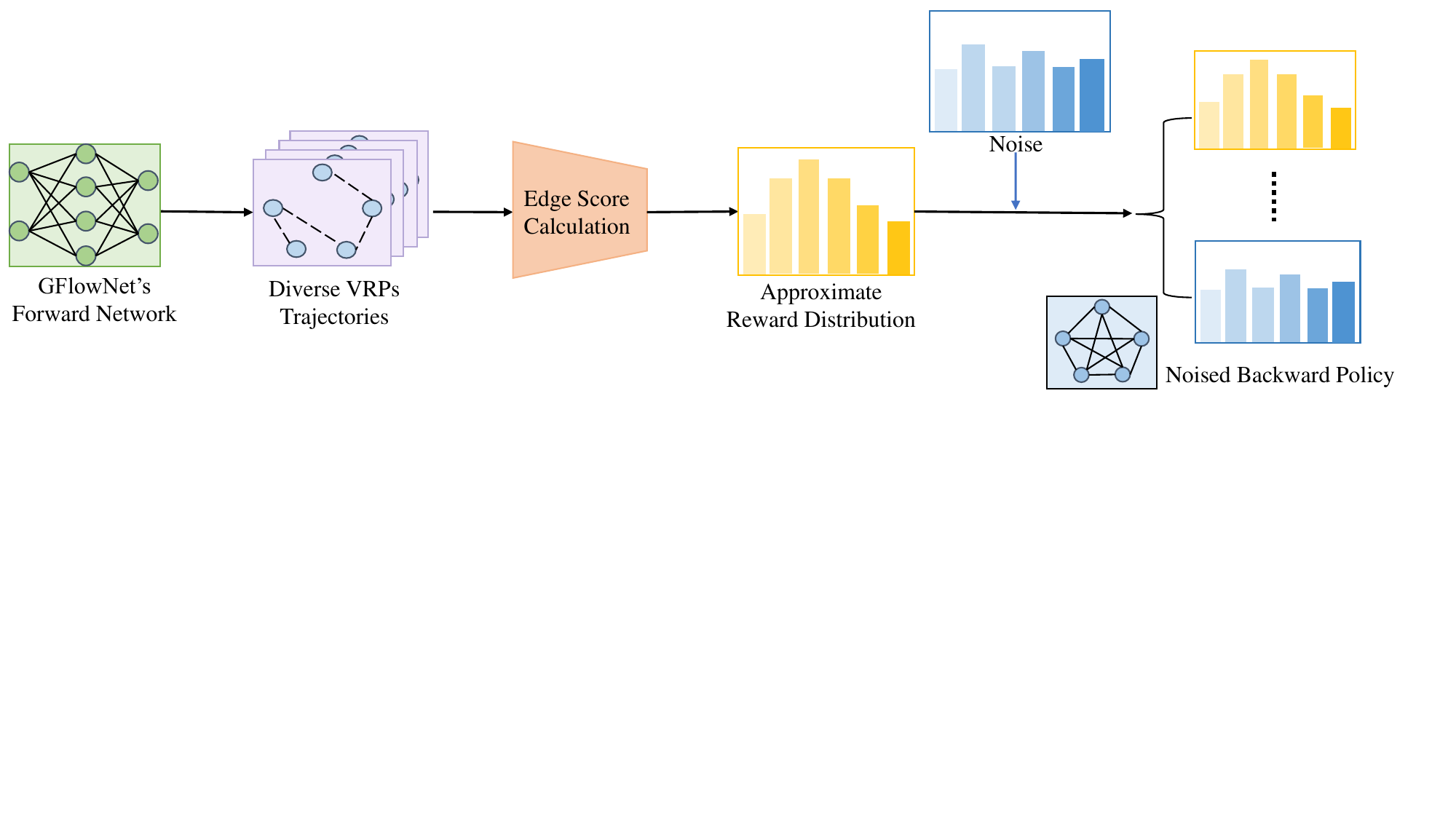}
    \caption{Illustration of Edge-specific Backward Signals Generation and Noise Schedule.}
    \label{fig: noise}
\end{figure*}
\label{sec:Reward-level Feedback and Noise Schedule}
Figure \ref{fig: noise} illustrates the generation of edge-specific backward signals and noise schedule. We first use GFlowNet to generate a diverse set of trajectories and assign standardized scores to each edge, estimating an approximate reward distribution that serves as edge-specific backward signals. Although the individual trajectories produced during GFlowNet sampling may not be optimal, their collective information forms a robust reward distribution that guides the diffusion learning process. A noise schedule then injects controlled noise into this distribution to derive a noised backward policy.

By leveraging the inherent diversity of GFlowNet, the model can generate a wide range of trajectories, after which the edge score calculation is applied. The edge score calculation begins by normalizing the reward of each trajectory \( \tau_k \in \mathcal{T} \) using the following formulation:
\begin{equation}
    -\log {R}({\tau}_{k}) = D({\tau}_{k}) - \frac{1}{m} \sum_{l=1}^{m} D({\tau}_{l}),
\label{eq:reward}
\end{equation}
where \( D(\tau_k) \) is the total distance of trajectory \( \tau_k \), defined as: $D(\tau_k) = \sum_{l=1}^{N-1} d(q_l, q_{l+1}).$
Here, $d(q_l, q_{l+1})$ denotes the Euclidean distance between nodes $q_l$ and $q_{l+1}$. A higher reward \( R(\tau_k) \) corresponds to a shorter, higher-quality trajectory. We define the reward of an edge $e$ as the average reward of the trajectories in which it appears:
\begin{equation}
    R(e) = \frac{1}{\left| \left\{ \tau_k \;\middle|\; \tau_k \in \mathcal{T},\; e \in \tau_k \right\} \right|}
    \sum_{\tau_k \in \mathcal{T},\, e \in \tau_k} R(\tau_k),
\end{equation}
where the denominator counts the number of trajectories containing edge $e$. This edge-level reward $R(e)$ reflects the global significance of edge $e$ across the sampled solution space. To standardize $R(e)$ across different instances, we use z-score normalization \citep{montgomery2010applied}: $\hat{R}(e) = \frac{R(e) - \mu_{R}}{\sigma_{R}},$
where \( \mu_{R} \) and \( \sigma_{R} \) denote the mean and standard deviation of edge rewards within the instance. To convert the standardized scores into bounded confidence values, we apply the cumulative distribution function (CDF) \citep{ross2014introduction} of the standard normal distribution:
\begin{equation}
    x_0 = \Phi\left( \hat{R}(e) \right),
\label{eq:cdf}
\end{equation}
where \( \Phi(\cdot) \) denotes the CDF of the standard normal distribution. The value $x_0$ represents an approximate reward distribution that provides edge-specific backward signals and serves as the basis for adding noise to integrate diffusion into GFlowNet. The noise schedule then injects Gaussian noise at each training step to obtain a noised backward policy $x_t$:
\begin{equation}
\label{eq:backflow}
    x_t = \sqrt{{\alpha}_t} \cdot x_0 + \sqrt{1 - {\alpha}_t} \cdot \epsilon, \quad \epsilon \sim \mathcal{N}(0, 1),
\end{equation}
where ${\alpha}_t = \prod_{k=1}^{t}(1 - \beta_k)$, and $\beta_k \in (0, 1)$ is the corruption ratio at diffusion step $k$. To ensure full transformation of the clean signal into noise by the final training iteration $I$, $\beta_k$ should be defined such that $\prod_{k=1}^{I}(1 - \beta_k) \approx 0$. We use continuous Gaussian noise to match the inherently continuous nature of the backward policy in GFlowNet.

\textbf{Effect of Edge-specific Backward Signals and Noise Schedule on High-Reward Region Exploration}. DEG uses a noise schedule that progressively corrupts the edge-specific backward signals during training, together with the Trajectory Balance (TB) objective~\citep{malkin2022trajectory}. We characterize the learning target induced by this schedule at each noise level, with proofs provided in Appendix~\ref{app:target_transition}. In practice, the noised signal is clamped as
$\tilde{x}_t[e]=\max(\mathrm{clamp}(x_t[e],0,1),\delta)\in[\delta,1]$ with $\delta=10^{-10}$, ensuring that the backward-side trajectory weight
$W_t(\tau)=\prod_{e\in\tau}\tilde{x}_t[e]$, which replaces $P_B(\tau)$ in $\mathcal{L}_\text{TB}$ (Eq.~\ref{eq:tb_loss equation}), remains strictly positive.

Because TB minimizes a squared log-ratio, the injected noise affects the optimum through $\mathbb{E}_{\epsilon}[\log W_t(\tau)]$. For a fixed graph $\mathcal{G}$ and signal $x_0$, the population minimizer at noise level $t$ is
\begin{equation}
\label{eq: pf_r_pb}
P_F^{t}(\tau) \propto R(\tau)e^{\mu_t(\tau)},
\qquad
\mu_t(\tau)=\sum_{e\in\tau}\mathbb{E}_{\epsilon}\big[\log \tilde{x}_t[e]\big],
\end{equation}
which holds exactly at every noise level (Proposition~\ref{prop:exact_target}). The target can therefore be viewed as reward-proportional sampling modified by an edge-specific factor, whose dependence on $x_0$ is gradually reduced by the noise schedule. At the beginning of training, no noise is added ($\bar{\alpha}_t=1$), giving
\begin{equation}
\label{eq:train_low}
P_F^{0}(\tau) \propto R(\tau)\prod_{e\in\tau}x_0[e].
\end{equation}
Compared with $\pi_R\propto R$, this distribution gives more weight to trajectories containing high-scoring edges. Its expected reward is higher than that of $\pi_R$ whenever $\prod_{e\in\tau}x_0[e]$ is positively correlated with $R(\tau)$ (Proposition~\ref{prop:low_noise}), which motivates constructing $x_0$ from edge-reward statistics. This early stage therefore favors exploitation. Only when $\prod_{e\in\tau}x_0[e]\propto R(\tau)$ does Eq.~\ref{eq:train_low} reduce to $P_F\propto R(\tau)^2$; our CDF-based construction of $x_0$ does not assume this special case.

As the noise increases, the influence of $x_0$ weakens. At full corruption ($\bar{\alpha}_t=0$), $\tilde{x}_t[e]$ is independent of $x_0$, and $\mu_t(\tau)=c_T|\tau|$ for a constant $c_T$. For TSP, all tours contain exactly $n$ edges, so this term is identical across trajectories and
\begin{equation}
\label{eq:train_high}
P_F(\tau) \propto R(\tau).
\end{equation}
Before reaching this endpoint, $\mathrm{KL}(P_F^{t}\,\|\,\pi_R)$ admits a bound that decreases to zero as $\bar{\alpha}_t\to0$ (Theorem~\ref{thm:kl_bound}), providing a useful characterization of the late training stage. For CVRP, a trajectory with $K(\tau)$ routes contains $n+K(\tau)$ edges. The endpoint target therefore becomes
$P_F(\tau)\propto R(\tau)e^{c_TK(\tau)}$ with $c_T<0$. It is reward-proportional among solutions with the same number of routes, while additionally favoring solutions with fewer routes (Remark~\ref{rem:cvrp}).

Overall, the noise schedule gradually removes the edge-specific factor from the learning target, moving from the edge-guided distribution in Eq.~\ref{eq:train_low} toward the reward-proportional distribution in Eq.~\ref{eq:train_high}. This gives a natural transition from early exploitation to broader exploration.

\subsection{Diffusion-enhanced GFlowNet Framework}
In our framework, the model learns to construct solutions by drawing on both the noisy backward policy and the reward within the GFlowNet paradigm, which denoises the noised backward policy during training by extracting its informative weights, and using them to enhance forward-policy learning. The loss over a set of trajectories $\mathcal{T} = \{\tau_1, \tau_2, \ldots, \tau_m\}$, based on the denoised forward policy which estimates reward distribution, is defined as:
\begin{equation}
\mathcal{L}(\mathcal{T}; \theta) = \sum_{k=1}^{m} \left( \log \frac{Z(\mathcal{G}; \theta) P_{F}({\tau}_{k}; \bm{\theta})}{R(\tau_k) P_{B}({\tau}_{k})} \right)^2.
\label{eq:loss equation}
\end{equation}
Here, source flow $Z(\mathcal{G}; \theta)$ is a learnable partition function derived from Eq.~\ref{eq:Z_calculation}, and reward $R(\tau_k)$ is defined in Eq.~\ref{eq:reward}. The forward probability $P_{F}({\tau}_{k}; \bm{\theta})$ is computed as:
\begin{equation}
\label{eq:forward}
P_{F}({\tau}_{k}; \bm{\theta}) = \prod_{t=1}^{N} P_{F}(s_t|s_{t-1}; \bm{\theta}),
\end{equation}
where $s_t = \{q_1, q_2, \ldots, q_t\}$ denotes the state at step $t$ during the construction (or destruction) of the trajectory. The term $P_{F}(s_t|s_{t-1})$ is obtained from the forward policy $\eta(\mathcal{G}^*, \bm{\theta})$, which is derived from the GFlowNet forward network described in Sec.~\ref{sec:forward network}. 

Moreover, $\eta(\mathcal{G}^*, \bm{\theta})$ can be viewed as a \emph{heatmap} over all edges of the current graph: it is a $|V|\times |V|$ probability matrix where each entry $\eta(i,j)$ gives the likelihood of selecting the edge $(v_i,v_j)$ from the current node $v_i$ to a candidate node $v_j$. During decoding, the next node of the route is sampled according to these edge probabilities. After each selection, the graph state $\mathcal G^*$ is updated by marking the chosen node as visited and updating feasibility masks (e.g., vehicle capacity), and the process is repeated until all customers are visited and a complete route is
constructed.

Similarly, the backward transition probability $P_B({\tau}_{k})$ in Eq.~\ref{eq:loss equation} is computed as:
\begin{equation}
\label{eq:backward}
P_{B}({\tau}_{k}; \bm{\theta}) = \prod_{k=1}^N P_{B}(s_{k-1}|s_k; \bm{\theta}),
\end{equation}
where $P_{B}(s_{t-1}\mid s_t;\bm{\theta})$ is derived from the noised backward policy $x_t$. Specifically, for a partial route state $s_t=\{q_1,q_2,\ldots,q_t\}$, this corresponds to removing the last node $q_t$ from the current partial route, which eliminates the edge $(q_t,q_{t-1})$. Thus $P_{B}(s_{t-1}\mid s_t;\bm{\theta})$ is taken directly from the value of $x_t$
at the position $(q_t,q_{t-1})$.
\subsection{Graph-Scale Adapter for Inference}
\label{sec:Decoder Strategies for GED}
During training, we adopt GFlowNet’s inherent sampling strategy to generate diverse trajectories to provide edge-specific signal. This enables efficient sampling and strengthens training effectiveness, with a dedicated model trained for a specific problem size. However, this strategy does not generalize well at inference time due to scale mismatches between training and test instances. 

Specifically, the DEG forward policy learns an approximate reward distribution guided by feedback from the noisy backward flow. However, this design may increase the risk of overfitting to spatial patterns specific to the training scale, such as average neighbor distances and node observation radii defined by the sparsity factor $w$. For example, when trained on 100-node instances, the model implicitly learns to solve instances with an average neighbor distance of approximately $\frac{1}{\sqrt{100}}$ and a node observation distance of $\frac{1}{\sqrt{w}}$. When evaluated on a 500-node instance, however, the average neighbor distance drops to $\frac{1}{\sqrt{500}}$, while the observation distance becomes $\frac{1}{\sqrt{5w}}$. This spatial inconsistency hampers generalization and often leads to suboptimal inference. The best outcomes of large-scale instances appear at earlier training stages, when the model has not yet overfitted and its estimation of spatial structures remains effective and less biased. We empirically validate this behavior, with details provided in Appendix~\ref{sec:GSA Comparison Results}.

To mitigate this issue, we propose the \textit{Graph-Scale Adapter} (GSA), which aligns the spatial patterns of test instances to those of fixed-size training instances. GSA introduces a scaling factor $\alpha$ that adjusts node distances and the sparsity factor in the test graph to match the spatial characteristics of the training graph. The scaling factor is defined as: $\alpha \propto \sqrt{\frac{|\mathcal{V}_{\text{test}}|}{|\mathcal{V}_{\text{train}}|}}.$
This adjustment ensures structural consistency across different scales. For instance, if the model is trained on 100-node graphs and evaluated on 500-node graphs, applying GSA ensures that the average neighbor distances and node observation radius remain consistent, approximately $\frac{1}{\sqrt{100}}$ and $\frac{1}{\sqrt{w}}$. To assess the effectiveness of GSA, we apply it during inference on instances of varying sizes. As shown in Appendix~\ref{sec:GSA Comparison Results}, GSA effectively alleviates spatial mismatches caused by scale-specific overfitting and significantly improves the model’s ability to generalize across a broad range of instance sizes.
\section{Experiments}
\begin{table*}[t]
\centering
\vspace{-5mm}
\caption{Comparison of performance and runtime on TSP with varying node sizes. Obj. denotes the total route length; Gap (\%) is computed relative to LKH; and Time (s) indicates the average time required to solve a single instance. For all metrics, lower values indicate better results.}
\resizebox{\textwidth}{!}{
\begin{tabular}{l|ccc|ccc|ccc}
\toprule\toprule
\multirow{2}{*}{\textbf{Method}} 
& \multicolumn{3}{c|}{\textbf{500}} 
& \multicolumn{3}{c|}{\textbf{1000}} 
& \multicolumn{3}{c}{\textbf{3000}} \\
& \textbf{Obj.} & \textbf{Time(s)} & \textbf{Gap(\%)} 
& \textbf{Obj.} & \textbf{Time(s)} & \textbf{Gap(\%)} 
& \textbf{Obj.} & \textbf{Time(s)} & \textbf{Gap(\%)} \\
\midrule
LKH & 16.55 & 31.80 & -- & 23.14 & 97.62 & -- & 40.31 & 776.23 & --\\
GFACS (AISTATS'25) & 22.97 & 8.43 & 38.79 & 38.58 & 18.90 & 66.72 & 69.63 & 81.15 & 72.74\\
AGFN (ICLR'25)  & 19.04 & 0.25 & 15.05 & 27.10 & 0.68 & 17.11 & 55.37 & 2.91 & 37.36\\
DIFUSCO (NeurIPS'23)&19.12&3.52&15.17&27.70&13.60&19.71& 50.39 & 125.02 & 25.01\\
T2T (NeurIPS'23)  &18.91 & 3.18 &14.26 &27.31&12.95&18.02& 56.99 & 115.71 & 41.38\\
Fast T2T (NeurIPS'24)&18.76&0.37&13.35&27.25&1.54&17.76& 57.73 & 4.76 & 43.22\\
DEITSP (KDD'25)&18.64&0.42&12.63&27.11&1.68&17.16& 54.37 & 5.35 & 34.81\\
DEG&\textbf{18.47}&0.28&\textbf{11.60}&\textbf{26.46}&0.72&\textbf{14.35}& \textbf{47.82} & 3.06 & \textbf{18.63}\\
\midrule

\midrule
\multirow{2}{*}{\textbf{Method}} 
& \multicolumn{3}{c|}{\textbf{5000}} 
& \multicolumn{3}{c|}{\textbf{8000}} 
& \multicolumn{3}{c}{\textbf{10000}} \\
& \textbf{Obj.} & \textbf{Time(s)} & \textbf{Gap(\%)} 
& \textbf{Obj.} & \textbf{Time(s)} & \textbf{Gap(\%)} 
& \textbf{Obj.} & \textbf{Time(s)} & \textbf{Gap(\%)} \\
\midrule
LKH  & 51.84 & 1608.11 & --&65.89 &4123.37 &-- & 75.78 & 5350.91 & -- \\
GFACS (AISTATS'25)  & 97.93 & 143.19& 88.91&120.76&275.22&83.28  & 140.84 & 326.04 & 85.85 \\
AGFN (ICLR'25)  & 80.93 & 6.65 & 56.12&118.53&13.09& 79.89 & 143.85 & 14.07 & 89.83 \\
DIFUSCO (NeurIPS'23)  & -- & -- & -- & -- & -- & --& -- & -- & --  \\
T2T (NeurIPS'23)  & -- & -- & -- & -- & -- & --& -- & -- & --  \\
Fast T2T (NeurIPS'24) & -- & -- & -- & -- & -- & --& -- & -- & --  \\
DEITSP (KDD'25)  & -- & -- & -- & -- & -- & --& -- & -- & --  \\
DEG  & \textbf{63.64} & 6.82 & \textbf{22.57} &\textbf{83.13}&14.97&\textbf{26.16} & \textbf{94.67} & 16.47 & \textbf{24.94} \\
\bottomrule\bottomrule
\end{tabular}
}
\begin{flushleft}
\small\textit{``--'' indicates that the model was unable to produce a result for a given instance either within the 90-minute time limit or due to GPU memory constraints.}
\end{flushleft}
\vspace{-4mm}
\label{tab:tsp_split}
\end{table*}
\label{sec:Experiments}
We present experimental results to evaluate the performance of DEG. We first assess its effectiveness on both synthetic instances for TSP and CVRP following previous works~\citep{kimant,zhangadversarial}. Since our model integrates GFlowNet with diffusion, we primarily compare it against GFlowNet-based and diffusion-based baselines. We then evaluate the model's generalization ability on public benchmarks, including TSPLib \citep{arnold2019efficiently} and CVRPLib \citep{reinelt1991tsplib}. Finally, we conduct ablation studies to examine the contribution of individual components in DEG. We additionally conduct more experiments on multiple scale training, GSA applicability, diversity test, learning efficiency and applicability to other COPs, the details of which are presented in Appendix~\ref{sec:Multiple Scale training} to~\ref{sec:Application to Other Combinatorial Optimization Problems}

Following prior work on VRP \citep{sun2023difusco, kimant, zhangadversarial, zhangdiffusion, Wang2025}, we use synthetic datasets for training both TSP and CVRP models. Each TSP instance consists of nodes represented only by their 2D coordinates, sampled independently from the unit square $[0,1]^2$, without any depot or demand. The objective is to compute the shortest tour that visits each node exactly once and returns to the start. For CVRP, each instance includes a single depot and multiple customer nodes served by a vehicle with a fixed capacity $C = 50$. Node locations (both depot and customers) are uniformly sampled from $[0,1]^2$. Each customer is assigned a demand independently drawn from a uniform distribution $U[a, b]$, where $a = 1$ and $b = 9$.
For training, DEG is trained on 100-node instances for both TSP and CVRP. For evaluation, we generate synthetic test sets containing 128 instances per problem size, ranging from 500 to 10,000 nodes (500, 1,000, 3,000, 5,000, 8,000 and 10,000).

Following standard practice \citep{sun2023difusco, Wang2025, graikos2022diffusion}, we adopt a linear noise schedule with $\beta_1 = 10^{-4}$ and $\beta_I = 0.02$. The model is trained for $I = 6000$ steps, using a batch size of 10 instances per step. We use the AdamW optimizer \citep{loshchilovdecoupled} with an initial learning rate of $10^{-4}$, decayed to $10^{-5}$ via a CosineAnnealingLR schedule \citep{loshchilov2022sgdr}. All experiments are conducted on a system with an NVIDIA H100 NVL GPU and an AMD EPYC 9554 CPU.

\subsection{Comparison Study on TSP}
\label{sec:Comparison Study on TSP}
As prior diffusion-based work has primarily focused on the TSP \citep{sun2023difusco, li2023t2t, li2024fast, Wang2025} in the VRP domain, we first evaluate DEG on synthetic TSP instances. Besides the heuristic solver LKH \citep{helsgaun2000effective}, we mainly compare it against both GFlowNet-based solvers including GFACS \citep{kimant} and AGFN \citep{zhangadversarial}, and diffusion-based solvers including DIFUSCO \citep{sun2023difusco}, T2T \citep{li2023t2t}, Fast T2T \citep{li2024fast}, and DEITSP \citep{Wang2025}. All models are trained on 100-node instances. To ensure fair comparison, and due to space constraints, Table~\ref{tab:tsp_split} reports the performance of models \textit{without} 2-opt post-processing. Following \citep{sun2023difusco, li2023t2t,li2024fast, Wang2025}, results with 2-opt post-processing are provided in Appendix~\ref{Dec:Two-opt} and Table \ref{tab:tsp_two-opt} for completeness. We also provide the comparison on small-scale instances in Appendix~\ref{sec:TSP and CVRP comparison in small-scale problems} and Table~\ref{tab:smallscale_sidebyside}.

As shown in Table~\ref{tab:tsp_split}, DEG outperforms all GFlowNet-based and diffusion-based solvers in terms of solution quality. Specially, it achieves improvements of up to $67.43\%$, $34.19\%$, $5.10\%$, $16.10\%$, $17.17\%$, and $12.05\%$ over GFACS, AGFN, DIFUSCO, T2T, Fast T2T, and DEITSP, respectively. Notably, the performance gap between our model and other baselines becomes more significant as the instance size increases. 
\subsection{Comparison Study on CVRP}
\label{sec:Comparison Study on CVRP}
\begin{table*}[t]
\centering
\caption{Comparison of performance and runtime on CVRP with varying node sizes. Obj. denotes the total route length; Gap (\%) is computed relative to LKH; and Time (s) indicates the average time required to solve a single instance. For all metrics, lower values indicate better results.}
\resizebox{\textwidth}{!}{
\begin{tabular}{l|ccc|ccc|ccc}
\midrule\midrule
\multirow{2}{*}{\textbf{Method}} 
& \multicolumn{3}{c|}{\textbf{500}} 
& \multicolumn{3}{c|}{\textbf{1000}} 
& \multicolumn{3}{c}{\textbf{3000}} \\
& \textbf{Obj.} & \textbf{Time(s)} & \textbf{Gap(\%)} 
& \textbf{Obj.} & \textbf{Time(s)} & \textbf{Gap(\%)} 
& \textbf{Obj.} & \textbf{Time(s)} & \textbf{Gap(\%)} \\
\midrule
LKH & 63.14 & 134.40 & -- & 119.52 & 638.40 & -- & 380.26* & 671.68  & --\\
GFACS (AISTATS’25) & 80.77 & 10.43 & 27.92 & 158.97 & 22.07 & 33.01 & 393.27 & 130.81 & 3.42\\
AGFN (ICLR'25) & 71.05 & 0.43 & 12.53 & 133.97 & 0.75 & 12.09 & 368.97 & 3.84 & -2.97\\
LEHD (NeurIPS’23) & \textbf{64.68} & 0.52 & \textbf{2.44} & \textbf{122.85} & 10.15 & \textbf{2.79}& \textbf{355.95} & 33.75 & \textbf{-6.39}\\
POMO (NeurIPS’20)&80.21&0.50&27.04&164.17&0.92&37.36 & 1094.48 & 7.01 & 187.82\\
POMO(*8) (NeurIPS’20)&75.45&0.67&19.50&135.74&3.13&12.74 & 704.99 & 18.26 & 85.40\\
DEG&69.76&0.45&10.48&129.67&0.80&8.49& 362.11 & 4.08 & -4.77\\
\midrule

\midrule
\multirow{2}{*}{\textbf{Method}} 
& \multicolumn{3}{c|}{\textbf{5000}} 
& \multicolumn{3}{c|}{\textbf{8000}} 
& \multicolumn{3}{c}{\textbf{10000}} \\
& \textbf{Obj.} & \textbf{Time(s)} & \textbf{Gap(\%)} 
& \textbf{Obj.} & \textbf{Time(s)} & \textbf{Gap(\%)} 
& \textbf{Obj.} & \textbf{Time(s)} & \textbf{Gap(\%)} \\
\midrule
LKH(1000) & 642.95* & 2263.34& -- & -- & -- & --& -- & -- & --  \\
GFACS (AISTATS’25)  & 650.62 & 262.63 & 1.19 &1038.30&529.04&--& 1439.19 & 607.26 & -- \\
AGFN (ICLR'25)  & 608.58 & 7.42 & -5.35 &917.24 &15.24 &-- & 1140.46 & 18.58 & -- \\
LEHD (NeurIPS’23)  & 603.89 & 96.75 & -6.08 &938.06&336.70&--& 1213.13 & 556.88 & -- \\
POMO (NeurIPS’20)& 1438.97 & 19.32 & 123.81 & -- & -- & --& -- & -- & -- \\
POMO(*8) (NeurIPS’20) & 1381.63 & 41.86 & 114.89 & -- & -- & --& -- & -- & -- \\
DEG  & \textbf{596.43} & 10.22 & \textbf{-7.24} &\textbf{907.65}&20.32&\textbf{--}& \textbf{1125.04} & 24.95 & \textbf{--} \\
\bottomrule\bottomrule
\end{tabular}
}
\begin{flushleft}
\small\textit{``--'' indicates that the model was unable to produce a result for a given instance either within the 90-minute time limit or due to GPU memory constraints. * denotes LKH run with 1,000 iterations due to time limits.}
\end{flushleft}
\label{tab:CVRP1}
\end{table*}
We present results on the CVRP instances, focusing primarily on comparisons with GFlowNet-based solvers, as no existing diffusion-based model is capable of solving CVRP. Besides the heuristic solver LKH \citep{helsgaun2000effective}, we compare DEG against GFlowNet-based solvers including GFACS \citep{kimant} and AGFN \citep{zhangadversarial}, as well as other neural solvers such as LEHD \citep{luo2023neural} and POMO \citep{kwon2020pomo} to ensure comprehensive evaluation. For LKH, we use a setting of 1000 iterations on 3000-, 5000-, 8000- and 10000-node instances, as larger-scale problems require substantial computational time. Also, all neural models are trained on 100-node synthetic instances, and the routes are evaluated directly without post-processing. Experimental results on small-scale instances are further provided in Appendix~\ref{sec:TSP and CVRP comparison in small-scale problems}, and the corresponding comparisons are summarized in Table~\ref{tab:smallscale_sidebyside}.

As shown in Table~\ref{tab:CVRP1}, DEG achieves better results than GFACS by $13.63\%$, $18.43\%$, $7.92\%$, $8.33\%$, $12.58\%$ and $21.83\%$ on the 500-, 1000-, 3000-, 5000-, 8000- and 10000-node instances, and outperforms AGFN by $1.82\%$, $3.21\%$, $1.86\%$, $2.00\%$, $1.05\%$ and $1.35\%$ on the same instance sizes. When compared to Transformer-based AR solvers, NAR methods often struggle to match their performance without the aid of improvement techniques, partly due to inherent limitations in network depth and complexity. Without relying on additional post-processing techniques, our model DEG is able to deliver competitive performance on large-scale instances, while also offering notable advantages in inference efficiency. Compared to  state-of-the-art AR mode, LEHD, which adopts a heavy decoder transformer network, our model reduces the objective value by $1.24\%$, $3.28\%$ and $7.26\%$ on the 5000-, 8000- and 10000-node instances, respectively, and reduces inference time by $13.46\%$, $92.61\%$, $87.91\%$, $87.37\%$, $93.96\%$ and $95.58\%$ on 500-, 1000-, 3000-, 8000- and 5000-node instances. Furthermore, our model consistently outperforms POMO in objective value and achieves shorter inference time across all instance sizes. Overall, our model achieves competitive results against GFlowNet-based solvers and other neural models, and offers significantly shorter inference time, particularly on large-scale instances. Notably, DEG achieves better performance than both LEHD and POMO on real-world datasets, as detailed in Sec.~\ref{sec:Inference on Real-World Dataset} and Table~\ref{tab:CVRPLib}.

\subsection{Inference on Real-World Dataset}
\begin{table*}[t]
\centering
\caption{Comparison of performance on TSPLib. Gap (\%) is computed relative to existing best-known solutions, and lower values indicate better results. }

\resizebox{0.9\textwidth}{!}{
\begin{tabular}{l|c|c|c|c|c|c|c|c}
\toprule
\multirow{1}{*}{\textbf{TSP}} & \multicolumn{1}{c|}{\textbf{GFACS}} & \multicolumn{1}{c|}{\textbf{AGFN}} & \multicolumn{1}{c|}{\textbf{DEG}}& \multicolumn{1}{c|}{\textbf{DIFUSCO*}}& \multicolumn{1}{c|}{\textbf{T2T*}}& \multicolumn{1}{c|}{\textbf{Fast T2T*}}& \multicolumn{1}{c|}{\textbf{DEITSP*}}& \multicolumn{1}{c}{\textbf{DEG*}} \\

\midrule
50-200 & 29.60 & 19.41 & \textbf{14.37}&0.96&0.37 & 0.26 &0.81& \textbf{0.21} \\
200-1000 & 51.67 & 25.51 & \textbf{17.68}&3.45&1.84 & 1.26 &1.46& \textbf{1.03} \\
\bottomrule
\end{tabular}

}
\begin{flushleft}
\small\textit{``--'' indicates that the model was unable to produce a result for a given instance either within the 90-minute time limit or due to GPU memory constraints. * represents using two-opt after trajectories are generated.}
\end{flushleft}

\label{tab:TSPLib}
\end{table*}

\begin{table*}[t]
\centering
\caption{Comparison of performance on CVRPLib. Gap (\%) is computed relative to existing best-known solutions, and lower values indicate better results.}
\resizebox{0.9\textwidth}{!}{
\begin{tabular}{l|cccccccccc}
\toprule
\textbf{Gap(\%)} & \textbf{L1 (3k)} & \textbf{L2 (4k)} & \textbf{A1 (6k)} & \textbf{A2 (7k)} & \textbf{G1 (10k)} & \textbf{G2 (11k)} & \textbf{B1 (15k)} & \textbf{B2 (16k)} \\
\midrule
GFACS  & 120.21 & 1800.04& 114.37 & 238.19 & 278.11 & 2242.09& 348.54 & 2634.74\\
AGFN   & 1144.20& 27.88  & 29.64  & 24.58  & 142.65 & 27.21  & 20.63  & 160.24 \\
LEHD   & \textbf{14.04}  & 26.30  & 18.90  & 26.40  & 27.23  & 38.45  & 35.94  & 40.76  \\
POMO   & 75.30  & 78.16  & 112.27 & 159.22 & --     & --     & --     & --     \\
{DEG}    & 19.64  & \textbf{26.08}  & \textbf{14.95}  & \textbf{23.80}  & \textbf{23.18}  & \textbf{24.41}  & \textbf{17.04}  & \textbf{23.47}  \\
\bottomrule
\end{tabular}}
\begin{flushleft}
\end{flushleft}
\label{tab:CVRPLib}
\end{table*}

\begin{table*}[t]
\centering
\caption{Ablation Study on TSP and CVRP. Obj. denotes the total route length, and Gap (\%) is computed relative to LKH; lower values indicate better results.}

\resizebox{1.0\textwidth}{!}{
\begin{tabular}{l|cc|cc|cc|cc|cc|cc}
\toprule
\multirow{2}{*}{\textbf{TSP}} & \multicolumn{2}{c|}{\textbf{500}} & \multicolumn{2}{c|}{\textbf{1000}}& \multicolumn{2}{c|}{\textbf{3000}}& \multicolumn{2}{c|}{\textbf{5000}}& \multicolumn{2}{c|}{\textbf{8000}} & \multicolumn{2}{c}{\textbf{10000}} \\
& \textbf{Obj.} & \textbf{Gap(\%)} & \textbf{Obj.}  & \textbf{Gap(\%)} & \textbf{Obj.}  & \textbf{Gap(\%)}& \textbf{Obj.}  & \textbf{Gap(\%)}& \textbf{Obj.}  & \textbf{Gap(\%)}& \textbf{Obj.}  & \textbf{Gap(\%)} \\
\midrule
Backbone & 21.96 & 32.69 & 35.41 & 53.03&64.82&60.80&91.66&76.81&--&--&--&--\\
~~~~+ Diffusion & 18.54 & 12.02 & 26.74 & 15.56&48.45&20.19&64.97&25.33&--&--&--&--\\
~~~~~~~~+ GSA & \textbf{18.47} & \textbf{11.60} & \textbf{26.46} & \textbf{14.35}&\textbf{47.82}&\textbf{18.63}&\textbf{63.64}&\textbf{22.57}&\textbf{83.13} &\textbf{26.16} &\textbf{94.67}&\textbf{24.94}\\
\midrule

\midrule
\multirow{2}{*}{\textbf{CVRP}} & \multicolumn{2}{c|}{\textbf{500}} & \multicolumn{2}{c|}{\textbf{1000}}& \multicolumn{2}{c|}{\textbf{3000}}& \multicolumn{2}{c|}{\textbf{5000}}& \multicolumn{2}{c|}{\textbf{8000}} &\multicolumn{2}{c}{\textbf{10000}} \\
& \textbf{Obj.} & \textbf{Gap(\%)} & \textbf{Obj.}  & \textbf{Gap(\%)} & \textbf{Obj.}  & \textbf{Gap(\%)} & \textbf{Obj.}  & \textbf{Gap(\%)}& \textbf{Obj.}  & \textbf{Gap(\%)}& \textbf{Obj.}  & \textbf{Gap(\%)}\\
\midrule
Backbone & 76.88 & 21.76 & 148.61 & 24.34&389.18 &2.35&--&--&--&--&--&--\\
~~~~+ Diffusion & 70.86 & 12.23 & 132.65 & 10.74 &368.99& -2.96&--&--&--&--&--&--\\
~~~~~~~~+ GSA & \textbf{69.76} & \textbf{10.48} & \textbf{129.67} & \textbf{8.49}& \textbf{362.11} & \textbf{-4.77} & \textbf{596.43} & -- & \textbf{907.65} & --& \textbf{1125.04} & -- \\
\bottomrule
\end{tabular}
}
\begin{flushleft}
\small\textit{``--'' indicates that the model was unable to produce a result for a given instance either within the 90-minute time limit or due to GPU memory constraints.}
\end{flushleft}
\label{tab:ablation}
\end{table*}

\label{sec:Inference on Real-World Dataset}
To assess the generalization ability of our model, we evaluate it on real-world benchmark datasets, TSPLib \citep{arnold2019efficiently} and CVRPLib \citep{reinelt1991tsplib}. 

Following prior works \citep{sun2023difusco, li2023t2t, li2024fast, Wang2025}, we evaluate our model on TSPLib with test instances ranging from 50 to 200 nodes and from 200 to 1000 nodes, respectively. We adopt the same experimental setup and baseline methods as described in Section~\ref{sec:Comparison Study on TSP}. The results are reported in Table~\ref{tab:TSPLib} in the Appendix. Our model outperforms GFACS and AGFN in gap by $51.45\%$ and $25.97\%$ on 200-500 node instances, and $65.78\%$ and $30.69\%$ on 500-1000 node instances, respectively. Compared to the diffusion-based models with two-opt post-processing, DEG achieves superior gap performance over DIFUSCO, DEITSP, T2T, and Fast T2T by $88.59\%$, $84.56\%$, $72.73\%$ and $74.07\%$ on 200-500 node instances, and $70.72\%$, $45.11\%$, $19.84\%$ and $30.82\%$ on 500-1000 node instances. Overall, our model outperforms both GFlowNet-based models and diffusion-based solvers on TSPLib, across both 50-200 and 200-1000 instances.

We test CVRPLib containing instances with the number of node ranging from 3000 to 16000. The experiment setting and baselines are same with comparison on CVRP benchmarks. As shown in Table~\ref{tab:CVRPLib}, DEG achieves consistently lower optimality gaps than all baseline methods across a wide range of real-world CVRPLib instances. Compared to GFACS, DEG reduces the gap on the same set of instances. Compared to AGFN, our model reduces significant gap on the same set of instances. Relative to LEHD, DEG achieves gap improvements of $0.84\%$ on L2, $20.89\%$ on A1, $9.85\%$ on A2, $14.87\%$ on G1, $36.51\%$ on G2, $52.59\%$ on B1, and $42.42\%$ on B2, while LEHD performs slightly better on L1. Finally, compared to POMO, our model significantly reduces the gap by $73.92\%$, $66.63\%$, $86.68\%$, and $85.05\%$ on L1, L2, A1, and A2. These results demonstrate the superior scalability and generalization ability of DEG in solving large-scale real-world CVRP instances.

\subsection{Ablation Study}
\label{sec:Ablation Study}
\textbf{Component Evaluation.} We evaluate the contribution of each key component in DEG through an ablation study. We begin by training the backbone model, which involves learning a pure GFlowNet to generate trajectories, serving as the baseline for subsequent comparison. Next, we integrate the diffusion mechanism into the backbone by incorporating the reward-level feedback and noise schedule described in Sec.\ref{sec:Reward-level Feedback and Noise Schedule}, in order to evaluate the effectiveness of the diffusion component. Finally, the GSA decoder (Sec.~\ref{sec:Decoder Strategies for GED}) is applied on top of the previous model to assess its impact. 

The results are presented in Table~\ref{tab:ablation}. Incorporating diffusion into the backbone leads to significant performance improvements. Adding the GSA module on top of the diffusion-enhanced model further improves performance on the TSP and CVRP across the same instance sizes. In addition to improving performance, GSA also reduces computational resource consumption by shortening the observation distance and decreasing the number of connections per node. As a result, the model is able to efficiently solve 10000-node TSP instances as well as 5000-, 8000- and 10000-node CVRP instances.

\section{Conclusion}
In this paper, we introduce Diffusion-Enhanced GFlowNet (DEG), a generative framework that integrates GFlowNet with diffusion to overcome key limitations of existing GFlowNet-based solvers for vehicle routing problems (VRPs). Specifically, their insufficient flow expansion in high-reward regions. To enable this integration, DEG leverages the intrinsic diversity of GFlowNet to provide edge-specific backward signals as a noise basis for diffusion integration. It then employs the diffusion's noise schedule to inject noise into these signals, and denoises them through the GFlowNet paradigm, guiding sufficient flow expansion toward high-reward regions to derive better solutions. Furthermore, to enhance scalability and generalization ability, we incorporate a Graph-Scale Adapter (GSA) that enables effective adaptation across diverse instance sizes. Extensive experiments on synthetic benchmarks and real-world datasets, including instances with up to 10,000 nodes, show that DEG delivers favorable performance compared to diffusion-, GFlowNet-based and other neural baselines. These results demonstrate that DEG has strong scalability, efficient inference, and robust generalization for solving VRPs.

While DEG outperforms the strong AR model like LEHD in inference time and generalization, one of the \textbf{limitations} is that DEG's parallel decoding still falls short in small-scale settings, where AR models excel at fine-grained path construction. Bridging this gap remains an important direction for our future work, which involves enhancing the local decision-making capacity of NAR models. In addition, extending DEG to a wider range of VRP variants is another venue we plan to explore.
\clearpage
\bibliographystyle{plain}
\bibliography{example_paper} 

\newpage
\appendix
\onecolumn
\section{More Methodology Detail}

\subsection{Forward Network and Training Paradigm}
\label{sec:forward network}
Following prior work \citep{kimant, zhangadversarial, sun2023difusco, xin2021neurolkh}, we adopt a Graph Neural Network (GNN) as the forward network $\theta$, incorporating distance information as edge features. GNN employs a sparsification strategy on the input graph $\mathcal{G}$ to reduce computational complexity. Specifically, the graph retains only the $w$ shortest edges per node, resulting in a sparse graph $\mathcal{G}^* = (\mathcal{V}, \mathcal{E}^*)$. 
The node coordinates $\mathbf{u}_v \in \mathbb{R}^2$ and edge distances $\mathbf{u}_e \in \mathbb{R}$ are first embedded into initial node and edge features $\mathbf{h}^0_i, \mathbf{e}^0_{ij} \in \mathbb{R}^d$ by learnable linear layers, where $d$ represents the number of hidden units per layer. Then, $\mathbf{h}^0_i$ and $\mathbf{e}^0_{ij}$ are processed by GNN to capture the underlying relationships between nodes and edges. At each layer $l$, node and edge features are updated as:

\begin{align}
    \mathbf{h}^{l+1}_i &= \mathbf{h}^l_i + \mathrm{ACT}(\mathrm{BN}(\mathbf{U}^l \mathbf{h}^l_i + \mathcal{A}_{j \in \mathcal{N}_i}(\sigma(\mathbf{e}^l_{ij}) \odot \mathbf{V}^l \mathbf{h}^l_j))), \\
    \mathbf{e}^{l+1}_{ij} &= \mathbf{e}^l_{ij} + \mathrm{ACT}(\mathrm{BN}(\mathbf{P}^l \mathbf{e}^l_{ij} + \mathbf{Q}^l \mathbf{h}^l_i + \mathbf{R}^l \mathbf{h}^l_j)).
\end{align}

Here, \( \mathbf{U}^l, \mathbf{V}^l, \mathbf{P}^l, \mathbf{Q}^l, \mathbf{R}^l \in \mathbb{R}^{d \times d} \) are learnable parameters of the GNN layers; SiLU \citep{elfwing2018sigmoid} is used as the activation function for all intermediate layers; BN denotes batch normalization; \( \sigma \) is the sigmoid function \citep{han1995influence}; \( \odot \) is the Hadamard product; and \( \mathcal{A} \) denotes mean pooling over neighbors.

The denoised forward policy $\eta(\mathcal{G}^*, \bm{\theta})$ is computed using a multi-layer perceptron (MLP) applied to the final-layer GNN embeddings, followed by a sigmoid activation to normalize outputs. The source flow $Z(\mathcal{G}; \theta)$ is computed from the final-layer edge embedding $e^d$ as follows:
\begin{equation}
Z(\mathcal{G}; \theta) = M \cdot \text{ReLU}(J \cdot e^d + b) + f,
\label{eq:Z_calculation}
\end{equation}
where $M$, $J$, $b$, and $f$ are learnable parameters, and ReLU is the activation function.

\section{Analysis of the Learning Target under the Noise Schedule}
\label{app:target_transition}

We describe the implemented signal construction, then derive the population target at each noise level, analyze the two endpoints, and bound its distance from reward-proportional sampling.

\subsection{Setup and well-posedness}
\label{app:setup}

Fix an instance graph $\mathcal{G}$. A trajectory $\tau$ is a complete construction sequence with reward $R(\tau)>0$, and $|\tau|$ denotes its number of edges. For TSP, $|\tau|=n$ for every tour. For CVRP, $|\tau|=n+K(\tau)$, where $K(\tau)$ is the number of routes.

\textbf{Edge signal.}
The edge reward $\bar{R}(e)$ is the average reward of sampled trajectories containing edge $e$. After z-score normalization, the clean signal is
$x_0[e]=\Phi(\hat{R}(e))\in(0,1)$
(Eq.~\ref{eq:cdf}).

\textbf{Corruption.}
At noise level $t$ (Eq.~\ref{eq:backflow}),
\begin{equation}
x_t[e]
=
\sqrt{\bar{\alpha}_t}\,x_0[e]
+
\sqrt{1-\bar{\alpha}_t}\,\epsilon[e],
\qquad
\epsilon[e]\sim\mathcal{N}(0,1),
\qquad
\bar{\alpha}_t=\prod_{k=1}^{t}(1-\beta_k),
\end{equation}
where the noise is sampled independently across edges.

\textbf{Clamping.}
Because $x_t[e]$ may fall outside $[0,1]$, we use
\begin{equation}
\tilde{x}_t[e]
=
\max\big(\mathrm{clamp}(x_t[e],0,1),\delta\big)
\in[\delta,1],
\qquad
\delta=10^{-10}.
\end{equation}

\textbf{Backward-side trajectory weight.}
The backward term used in TB is
\begin{equation}
\label{eq:app_wt}
W_t(\tau)
=
\prod_{e\in\tau}\tilde{x}_t[e]
\in[\delta^{|\tau|},1].
\end{equation}
The factors $\tilde{x}_t[e]$ are used directly rather than normalized over parent states. Thus, $W_t$ is a positive trajectory weight rather than a normalized backward transition kernel. The clamping guarantees $W_t>0$.

\textbf{Objective.}
The TB loss for a trajectory is
\begin{equation}
\label{eq:app_loss}
L_t(\tau;\theta)
=
\Big(
\log Z(\mathcal{G};\theta)
+
\log P_F(\tau;\theta)
-
\log R(\tau)
-
\log W_t(\tau)
\Big)^2.
\end{equation}
Since $W_t>0$ and $\log W_t$ is bounded, the loss is well defined. Equivalently, this is TB with the trajectory reward
$\tilde{R}_t(\tau)=R(\tau)W_t(\tau)$.

\begin{assumption}[Population setting]
\label{ass:population}
The signal $x_0$ is fixed, trajectories are sampled from a distribution with full support, and optimization is over all strictly positive trajectory distributions $P_F$ together with the normalizer $Z>0$.
\end{assumption}

\subsection{Exact target at every noise level}
\label{app:exact_target}

\begin{lemma}[Bias--variance decomposition]
\label{lem:bias_variance}
For every trajectory $\tau$ and noise level $t$,
\begin{equation}
\mathbb{E}_{\epsilon}\big[L_t(\tau;\theta)\big]
=
\Big(
\log Z(\mathcal{G};\theta)
+
\log P_F(\tau;\theta)
-
\log R(\tau)
-
\mu_t(\tau)
\Big)^2
+
\sigma_t^2(\tau),
\end{equation}
where
\begin{equation}
\label{eq:app_mu}
\mu_t(\tau)
=
\mathbb{E}_{\epsilon}\big[\log W_t(\tau)\big]
=
\sum_{e\in\tau}h_t\big(x_0[e]\big),
\end{equation}
with
\begin{equation}
h_t(u)
=
\mathbb{E}_{\epsilon\sim\mathcal{N}(0,1)}
\left[
\log
\max\left(
\mathrm{clamp}
\left(
\sqrt{\bar{\alpha}_t}\,u
+
\sqrt{1-\bar{\alpha}_t}\,\epsilon,
0,1
\right),
\delta
\right)
\right],
\end{equation}
and
$\sigma_t^2(\tau)=
\mathrm{Var}_{\epsilon}[\log W_t(\tau)]$,
which does not depend on $\theta$.
\end{lemma}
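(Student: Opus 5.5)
The plan is to treat $L_t(\tau;\theta)$ as the square of a random variable in $\epsilon$ and apply the identity $\mathbb{E}[(a-Y)^2]=(a-\mathbb{E}Y)^2+\mathrm{Var}(Y)$. Concretely, I would write
\[
a(\tau;\theta)=\log Z(\mathcal{G};\theta)+\log P_F(\tau;\theta)-\log R(\tau),\qquad Y_t(\tau)=\log W_t(\tau),
\]
so that $L_t(\tau;\theta)=(a-Y_t)^2$ by Eq.~\ref{eq:app_loss}. The quantity $a$ depends on $\theta$, $\tau$ and $\mathcal{G}$ but not on $\epsilon$. All randomness therefore sits in $Y_t$.

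The first step is integrability. By Eq.~\ref{eq:app_wt}, $W_t(\tau)\in[\delta^{|\tau|},1]$, so $Y_t\in[|\tau|\log\delta,0]$. This makes $Y_t$ bounded, and hence all of its moments exist. With that in hand, I expand
\[
\mathbb{E}_\epsilon[(a-Y_t)^2]=a^2-2a\,\mathbb{E}Y_t+\mathbb{E}Y_t^2,
\]
then add and subtract $(\mathbb{E}Y_t)^2$. This gives $(a-\mu_t)^2+\mathrm{Var}_\epsilon(Y_t)$ with $\mu_t=\mathbb{E}_\epsilon Y_t$. The variance term $\sigma_t^2(\tau)$ involves only $x_0$, $\bar\alpha_t$, $\delta$ and $\tau$, so it is independent of $\theta$.

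The second step identifies $\mu_t$ as the stated sum. Since $Y_t=\sum_{e\in\tau}\log\tilde{x}_t[e]$ is a finite sum of bounded terms, linearity of expectation gives $\mu_t=\sum_{e\in\tau}\mathbb{E}_\epsilon\log\tilde{x}_t[e]$. Each $\tilde{x}_t[e]$ is a function only of $x_0[e]$ and of $\epsilon[e]\sim\mathcal{N}(0,1)$. Its expectation is therefore exactly $h_t(x_0[e])$, where $h_t(u)$ is finite because the integrand lies in $[\log\delta,0]$.

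One subtlety needs care here. If an edge appears more than once in $\tau$ (for example, depot edges in CVRP), the corresponding factors share the same $\epsilon[e]$. Linearity of expectation still applies to such repeated terms, so $\mu_t$ is unaffected. Only the variance would pick up covariance terms, and the lemma does not claim an edgewise formula for $\sigma_t^2$.

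I do not expect a real obstacle. The only points needing explicit justification are the boundedness that licenses the expectation manipulations and the shared-noise remark above.
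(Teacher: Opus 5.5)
Your proposal is correct and follows the paper's proof. Both apply $\mathbb{E}[(a-X)^2]=(a-\mathbb{E}X)^2+\mathrm{Var}(X)$, use the bound $\log W_t(\tau)\in[|\tau|\log\delta,0]$, and obtain $\mu_t$ by linearity over $\log W_t(\tau)=\sum_{e\in\tau}\log\tilde{x}_t[e]$. The paper keeps the deterministic $\log R(\tau)$ inside $X$ rather than in $a$, which makes no difference, and your remark on repeated edges sharing noise is a harmless addition the paper does not make.
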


\begin{proof}
Using
$\mathbb{E}[(a-X)^2]
=
(a-\mathbb{E}[X])^2+\mathrm{Var}(X)$,
set
\[
a=\log Z+\log P_F(\tau),
\qquad
X=\log R(\tau)+\log W_t(\tau).
\]
For fixed $\tau$, $R(\tau)$ is deterministic, while
$\log W_t(\tau)\in[|\tau|\log\delta,0]$.
The result follows from
$\log W_t(\tau)=\sum_{e\in\tau}\log\tilde{x}_t[e]$
and linearity of expectation.
\end{proof}

\begin{proposition}[Exact target]
\label{prop:exact_target}
Under Assumption~\ref{ass:population}, the expected TB loss at noise level $t$ has the unique population minimizer
\begin{equation}
\label{eq:app_target}
P_F^{t}(\tau)
=
\frac{R(\tau)e^{\mu_t(\tau)}}{Z_t},
\qquad
Z_t
=
\sum_{\tau'}
R(\tau')e^{\mu_t(\tau')}.
\end{equation}
\end{proposition}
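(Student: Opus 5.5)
By Lemma~\ref{lem:bias_variance}, for each trajectory the expected TB loss equals a squared residual plus the variance term $\sigma_t^2(\tau)$. That variance term does not depend on $(Z,P_F)$, so it can be dropped from the optimization. Let $\rho$ be the full-support sampling distribution from Assumption~\ref{ass:population}. The population objective then reduces to
\[
J(Z,P_F)=\sum_{\tau}\rho(\tau)\Big(\log Z+\log P_F(\tau)-\log R(\tau)-\mu_t(\tau)\Big)^2 ,
\]
and $J\ge 0$. Before using this, I will check that $\mu_t(\tau)$ is finite. Each $h_t(u)$ lies in $[\log\delta,0]$, so $\mu_t(\tau)\in[|\tau|\log\delta,0]$. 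Hence $R(\tau)e^{\mu_t(\tau)}$ is strictly positive and finite, and $Z_t$ is a finite positive sum over the finite trajectory set.

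\textbf{Existence.} I will show that the candidate attains the lower bound $0$. Take $P_F=P_F^t$ from Eq.~\ref{eq:app_target} and $Z=Z_t$. Then
\[
\log Z_t+\log P_F^t(\tau)=\log R(\tau)+\mu_t(\tau)
\]
for every $\tau$, so every residual vanishes and $J=0$. This pair is admissible: $P_F^t$ is strictly positive, it sums to one by the definition of $Z_t$, and $Z_t>0$. It is therefore a global minimizer.

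\textbf{Uniqueness.} Suppose $(Z,P_F)$ is any other minimizer, so $J(Z,P_F)=0$. Because $\rho(\tau)>0$ for all $\tau$, every residual must be zero, which gives $P_F(\tau)=R(\tau)e^{\mu_t(\tau)}/Z$ for every $\tau$. Summing over $\tau$ and using $\sum_\tau P_F(\tau)=1$ forces $Z=Z_t$, and therefore $P_F=P_F^t$. So the minimizer is unique, as a distribution and jointly with its normalizer.

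The main subtlety is that full support is what makes every residual vanish. Without it, $P_F$ would be determined only on the support of $\rho$, and both normalization and uniqueness could fail. A second point to state explicitly is that optimization is over arbitrary positive distributions, not a parametric family. This is what allows the minimum value $0$ to be attained exactly rather than only approximated.
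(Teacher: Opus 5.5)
Your proposal is correct and follows the paper's argument: you drop the $\theta$-independent variance term from Lemma~\ref{lem:bias_variance}, use full support to force $Z P_F(\tau)=R(\tau)e^{\mu_t(\tau)}$ for every $\tau$, and sum over trajectories to obtain $Z=Z_t$. You also check that $\mu_t$ is finite and verify explicitly that the candidate attains the lower bound $0$, details the paper leaves implicit.
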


\begin{proof}
By Lemma~\ref{lem:bias_variance}, the $\theta$-dependent term is minimized when
\[
ZP_F(\tau)
=
R(\tau)e^{\mu_t(\tau)}
\]
for every trajectory. Full support requires this equality for all $\tau$. Summing over trajectories gives $Z=Z_t$, yielding Eq.~\ref{eq:app_target}.
\end{proof}

\begin{remark}
\label{rem:exact_target}
Proposition~\ref{prop:exact_target} is exact. The injected noise affects the optimum through
$\mathbb{E}_{\epsilon}[\log W_t]$,
while its variance contributes only the $\theta$-independent term
$\sigma_t^2$.
The result is independent of the trajectory sampling distribution as long as it has full support, including on-policy sampling. When $x_0$ changes during training, the proposition describes the population target associated with the current $x_0$ and noise level. It does not imply convergence of the parameterized training dynamics.
\end{remark}

\subsection{Low-noise endpoint: exploitation}
\label{app:low_noise}

At $\bar{\alpha}_t=1$,
\[
\tilde{x}_0[e]=\max(x_0[e],\delta),
\qquad
h_0(u)=\log\max(u,\delta).
\]
Define
\[
W_0(\tau)
=
\prod_{e\in\tau}\max(x_0[e],\delta).
\]
When $x_0[e]\ge\delta$ along $\tau$, this reduces to
$\prod_{e\in\tau}x_0[e]$.
Proposition~\ref{prop:exact_target} then gives
\begin{equation}
\label{eq:app_low}
P_F^{0}(\tau)
\propto
R(\tau)W_0(\tau).
\end{equation}

\begin{assumption}[Edge decomposability]
\label{ass:decomposable}
There exists a constant $c>0$ such that
\[
\prod_{e\in\tau}x_0[e]
=
cR(\tau)
\]
for every trajectory $\tau$.
\end{assumption}

\begin{proposition}[Low-noise endpoint]
\label{prop:low_noise}
(a) Under Assumption~\ref{ass:decomposable} and $x_0[e]\ge\delta$,
\[
P_F^{0}(\tau)\propto R(\tau)^2.
\]
(b) Without Assumption~\ref{ass:decomposable}, let
$\pi_R(\tau)\propto R(\tau)$. Then
\begin{equation}
\label{eq:app_cov}
\mathbb{E}_{P_F^{0}}[R]
-
\mathbb{E}_{\pi_R}[R]
=
\frac{
\mathrm{Cov}_{\pi_R}(R,W_0)
}{
\mathbb{E}_{\pi_R}[W_0]
}.
\end{equation}
\end{proposition}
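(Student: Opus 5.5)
Both parts follow from the explicit form of the low-noise target in Eq.~\ref{eq:app_low}, which Proposition~\ref{prop:exact_target} supplies at $\bar{\alpha}_t=1$:
\[
P_F^{0}(\tau)=\frac{R(\tau)W_0(\tau)}{\sum_{\tau'}R(\tau')W_0(\tau')}.
\]
Since the trajectory space for a fixed instance is finite and $R>0$, $W_0\in[\delta^{|\tau|},1]$, every sum below is finite and strictly positive. No convergence or integrability issues arise.

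For part (a), the condition $x_0[e]\ge\delta$ gives $\max(x_0[e],\delta)=x_0[e]$ on every edge. Hence $W_0(\tau)=\prod_{e\in\tau}x_0[e]$, and Assumption~\ref{ass:decomposable} turns this into $cR(\tau)$. Substituting, $P_F^0(\tau)=cR(\tau)^2/\sum_{\tau'}cR(\tau')^2$. The constant $c>0$ cancels, leaving $P_F^0\propto R^2$.

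For part (b), I will express both expectations through $\pi_R(\tau)=R(\tau)/Z_R$, where $Z_R=\sum_\tau R(\tau)$. Dividing the numerator and denominator of $P_F^0$ by $Z_R$ gives the reweighting identity
\[
P_F^0(\tau)=\frac{\pi_R(\tau)W_0(\tau)}{\mathbb{E}_{\pi_R}[W_0]}.
\]
So for any function $f$, $\mathbb{E}_{P_F^0}[f]=\mathbb{E}_{\pi_R}[fW_0]/\mathbb{E}_{\pi_R}[W_0]$. Taking $f=R$ and subtracting $\mathbb{E}_{\pi_R}[R]=\mathbb{E}_{\pi_R}[R]\,\mathbb{E}_{\pi_R}[W_0]/\mathbb{E}_{\pi_R}[W_0]$ gives
\[
\mathbb{E}_{P_F^0}[R]-\mathbb{E}_{\pi_R}[R]=\frac{\mathbb{E}_{\pi_R}[RW_0]-\mathbb{E}_{\pi_R}[R]\,\mathbb{E}_{\pi_R}[W_0]}{\mathbb{E}_{\pi_R}[W_0]}.
\]
The numerator is exactly $\mathrm{Cov}_{\pi_R}(R,W_0)$, which is Eq.~\ref{eq:app_cov}.

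There is no real obstacle; the only point needing care is justifying the reweighting identity. This requires $\mathbb{E}_{\pi_R}[W_0]>0$, which holds because the clamping gives $W_0\ge\delta^{|\tau|}>0$. I will also note that in part (b), $W_0$ is the clamped product, so no condition $x_0[e]\ge\delta$ is needed there. A useful corollary to record is that the sign of the gain equals the sign of the covariance, which matches the main-text statement that $P_F^0$ improves on $\pi_R$ whenever $W_0$ and $R$ are positively correlated under $\pi_R$.
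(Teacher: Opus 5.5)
Your proof is correct and follows the paper's argument essentially line for line: in (a) you substitute $W_0=cR$ into the low-noise target $P_F^0\propto RW_0$, and in (b) you write $P_F^0=\pi_R W_0/\mathbb{E}_{\pi_R}[W_0]$ and read off the covariance. Your added remarks make explicit two well-posedness points the paper leaves implicit: the trajectory space is finite, and $W_0$ is strictly positive because of the clamping.
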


\begin{proof}
For (a), substitute
$W_0(\tau)=cR(\tau)$
into Eq.~\ref{eq:app_low}.
For (b),
\[
P_F^{0}(\tau)
=
\frac{\pi_R(\tau)W_0(\tau)}
{\mathbb{E}_{\pi_R}[W_0]}.
\]
Therefore,
\[
\mathbb{E}_{P_F^{0}}[R]
=
\frac{\mathbb{E}_{\pi_R}[RW_0]}
{\mathbb{E}_{\pi_R}[W_0]},
\]
which gives Eq.~\ref{eq:app_cov}.
\end{proof}

Our CDF-based construction of $x_0$ does not satisfy Assumption~\ref{ass:decomposable} exactly, so the $R^2$ form is a special case rather than a general property. Equation~\ref{eq:app_cov} requires no such assumption: the low-noise target has higher expected reward than reward-proportional sampling whenever
$\mathrm{Cov}_{\pi_R}(R,W_0)>0$.
This is precisely the behavior encouraged by constructing $x_0$ from edge-reward statistics.

\subsection{High-noise endpoint: exploration}
\label{app:high_noise}

\begin{proposition}[High-noise endpoint]
\label{prop:high_noise}
At $\bar{\alpha}_t=0$,
\[
\mu_T(\tau)=c_T|\tau|,
\]
where
\[
c_T
=
\mathbb{E}_{\epsilon}
\left[
\log
\max(\mathrm{clamp}(\epsilon,0,1),\delta)
\right].
\]
Hence,
\[
P_F^{T}(\tau)\propto R(\tau)
\]
for TSP, while for CVRP,
\[
P_F^{T}(\tau)
\propto
R(\tau)e^{c_TK(\tau)}.
\]
\end{proposition}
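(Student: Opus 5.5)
The argument specializes Lemma~\ref{lem:bias_variance} and Proposition~\ref{prop:exact_target} to $\bar{\alpha}_t=0$. Setting $\bar{\alpha}_t=0$ in the definition of $h_t$, the signal term $\sqrt{\bar{\alpha}_t}\,u$ vanishes and the noise coefficient $\sqrt{1-\bar{\alpha}_t}$ equals $1$. Hence
\[
h_T(u)=\mathbb{E}_{\epsilon\sim\mathcal{N}(0,1)}\big[\log\max(\mathrm{clamp}(\epsilon,0,1),\delta)\big]=c_T
\quad\text{for every } u,
\]
so $h_T$ is a constant independent of $x_0[e]$.

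Substituting into Eq.~\ref{eq:app_mu} gives $\mu_T(\tau)=\sum_{e\in\tau}c_T=c_T|\tau|$. Before using this I will check that $c_T$ is finite and negative. The integrand lies in $[\log\delta,0]$, so it is bounded and the expectation exists. It equals $\log\delta$ with probability $\Phi(0)=1/2$ (the case $\epsilon\le 0$), equals $0$ when $\epsilon\ge1$, and equals $\log\epsilon<0$ on $(0,1)$. Therefore $\log\delta\le c_T<0$, which is what makes the sign claim $c_T<0$ in the main text valid.

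Next I apply Proposition~\ref{prop:exact_target}, which gives $P_F^T(\tau)\propto R(\tau)e^{c_T|\tau|}$. The two problems then separate by the edge counts fixed in the setup.
\begin{itemize}
\item For TSP, $|\tau|=n$ for every tour. The factor $e^{c_Tn}$ is therefore common to all trajectories and is absorbed into the normalizer $Z_T$, which leaves $P_F^T\propto R$.
\item For CVRP, $|\tau|=n+K(\tau)$. The factor $e^{c_Tn}$ again cancels under normalization, and the remaining factor $e^{c_TK(\tau)}$ gives $P_F^T(\tau)\propto R(\tau)e^{c_TK(\tau)}$.
\end{itemize}

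The derivation involves no real obstacle. The only point needing care is that the expectation defining $c_T$ is well defined and truly independent of $u$. This holds because the clamp and the floor at $\delta$ make the integrand bounded, and because at $\bar{\alpha}_t=0$ the law of $x_T[e]$ is exactly $\mathcal{N}(0,1)$ whatever the value of $x_0$. Full support (Assumption~\ref{ass:population}) is inherited from Proposition~\ref{prop:exact_target} and needs no separate argument.
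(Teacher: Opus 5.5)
Your proposal is correct and follows the paper's proof essentially verbatim: at $\bar{\alpha}_t=0$ the noised value no longer depends on $x_0[e]$, so $h_T\equiv c_T$ and $\mu_T(\tau)=c_T|\tau|$, and Proposition~\ref{prop:exact_target} together with the edge counts $|\tau|=n$ (TSP) and $|\tau|=n+K(\tau)$ (CVRP) finishes the argument. Your extra check that $\log\delta\le c_T<0$ is a useful addition that the paper leaves implicit. One minor slip there: the integrand equals $\log\delta$ on all of $\epsilon\le\delta$, and equals $\log\epsilon$ only on $(\delta,1)$, which does not affect the bound.
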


\begin{proof}
At $\bar{\alpha}_t=0$,
$\tilde{x}_T[e]$ depends only on $\epsilon[e]$ and is independent of $x_0[e]$. Thus
$h_T(u)=c_T$
for all $u$, giving
$\mu_T(\tau)=c_T|\tau|$.

For TSP, every tour contains $n$ edges, so
$e^{c_Tn}$
is absorbed into the normalizer. For CVRP,
$|\tau|=n+K(\tau)$;
after absorbing $e^{c_Tn}$, the remaining factor is
$e^{c_TK(\tau)}$.
\end{proof}

\begin{remark}[CVRP]
\label{rem:cvrp}
For the implemented value $\delta=10^{-10}$, numerical integration gives
$c_T\approx-11.9$.
The CVRP endpoint is therefore reward-proportional among solutions with the same number of routes, while favoring fewer routes across groups with different $K(\tau)$.

On the evaluated samples ($n=500$ and $n=1000$, 32 instances each), route count and cost are positively correlated on every instance, with mean correlations of $+0.53$ and $+0.36$, respectively. This does not imply that fewer routes are always preferable or that near-optimal solutions always use the minimum number of routes. Therefore, the unconditional reward-proportional endpoint is exact for TSP only; for CVRP, the exploration benefit is evaluated empirically.
\end{remark}

\subsection{Rate between the endpoints}
\label{app:rate}

\begin{lemma}[Monotonicity and Lipschitz constant of $h_t$]
\label{lem:lipschitz}
Let $\bar{\alpha}_t<1$. For $0\le u\le v\le1$,
\begin{equation}
0
\le
h_t(v)-h_t(u)
\le
K_t(v-u),
\qquad
K_t
\le
\frac{
\sqrt{\bar{\alpha}_t}\log(1/\delta)
}{
\sqrt{2\pi(1-\bar{\alpha}_t)}
}.
\end{equation}
\end{lemma}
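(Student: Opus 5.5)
The plan is to write $h_t$ as a Gaussian smoothing of a fixed monotone bounded function and use the layer-cake (Stieltjes) representation of that function. Set $a=\sqrt{\bar{\alpha}_t}$ and $s=\sqrt{1-\bar{\alpha}_t}>0$; the hypothesis $\bar{\alpha}_t<1$ is exactly what makes $s>0$. Define
\[
g(y)=\log\max\big(\mathrm{clamp}(y,0,1),\delta\big),
\]
so that $h_t(u)=\mathbb{E}_{\epsilon}[g(au+s\epsilon)]$. Explicitly, $g(y)=\log\delta$ for $y\le\delta$, $g(y)=\log y$ for $\delta<y<1$, and $g(y)=0$ for $y\ge1$. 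Hence $g$ is continuous, nondecreasing, and takes values in $[\log\delta,0]$.

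\textbf{Monotonicity.} For $u\le v$ and every realization of $\epsilon$, we have $au+s\epsilon\le av+s\epsilon$ because $a\ge0$. Since $g$ is nondecreasing, taking expectations gives $h_t(u)\le h_t(v)$. This is the lower bound $0\le h_t(v)-h_t(u)$.

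\textbf{Lipschitz bound.} Let $\mu$ be the Lebesgue--Stieltjes measure of $g$, which is supported on $[\delta,1]$ and has density $1/c$ there. Its total mass is $g(1)-g(\delta)=\log(1/\delta)$. The layer-cake identity gives
\[
g(y)=\log\delta+\int \mathbf{1}\{y>c\}\,d\mu(c).
\]
Applying Fubini (everything is nonnegative and bounded) yields
\[
h_t(u)=\log\delta+\int \Phi\!\Big(\frac{au-c}{s}\Big)\,d\mu(c).
\]
Hence
\[
h_t(v)-h_t(u)=\int\Big[\Phi\Big(\tfrac{av-c}{s}\Big)-\Phi\Big(\tfrac{au-c}{s}\Big)\Big]d\mu(c).
\]
By the mean value theorem and $\sup\Phi'=1/\sqrt{2\pi}$, each bracket is at most $\frac{a(v-u)}{s\sqrt{2\pi}}$. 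Integrating against $\mu$ gives
\[
h_t(v)-h_t(u)\le \frac{\sqrt{\bar{\alpha}_t}\,\log(1/\delta)}{\sqrt{2\pi(1-\bar{\alpha}_t)}}\,(v-u),
\]
which is the claimed bound on $K_t$.

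\textbf{Main obstacle.} The only real subtlety is justifying the layer-cake representation and the interchange of integrals. Both are routine here because $g$ is bounded, monotone, and absolutely continuous. If a measure-free argument is preferred, one can instead differentiate under the integral sign:
\[
h_t'(u)=\frac{a}{s}\int g'(y)\,\varphi\Big(\frac{y-au}{s}\Big)dy\le \frac{a}{s}\cdot\frac{1}{\sqrt{2\pi}}\int g'(y)\,dy=\frac{a\log(1/\delta)}{s\sqrt{2\pi}}.
\]
This follows by integrating by parts against the Gaussian density $\varphi$, using that $g$ is constant outside $[\delta,1]$.
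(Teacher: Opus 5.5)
Your proof is correct. The fallback you sketch at the end is exactly the paper's argument. The paper writes $h_t(u)=\mathbb{E}_{\epsilon}[\ell(\sqrt{\bar{\alpha}_t}u+s_t\epsilon)]$, with $s_t=\sqrt{1-\bar{\alpha}_t}$ and $\ell$ playing the role of your $g$. It uses $\ell'(z)=1/z$ on $(\delta,1)$ and $0$ elsewhere to get $h_t'(u)=\sqrt{\bar{\alpha}_t}\int_{\delta}^{1}\frac{1}{z}\frac{1}{s_t}\varphi((z-\sqrt{\bar{\alpha}_t}u)/s_t)\,dz$, then bounds $\varphi\le 1/\sqrt{2\pi}$. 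Two steps are left implicit there: interchanging the derivative with the expectation, and passing from the derivative bound to the increment bound.

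Your main route is genuinely different. Via the layer-cake identity you write $h_t$ as $\log\delta$ plus a $\mu$-mixture of shifted Gaussian CDFs $\Phi((au-c)/s)$, and you apply the mean value theorem to $\Phi$ rather than to $h_t$. The constant comes from the same two ingredients in both proofs: the peak $1/\sqrt{2\pi}$ of the Gaussian density times the total variation $\log(1/\delta)$ of the clamped logarithm.

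Your version has three advantages. It bounds $h_t(v)-h_t(u)$ directly. It needs only Tonelli for nonnegative integrands, rather than a dominated-convergence justification for differentiating under the expectation. It never uses differentiability of $g$, so it proves a more general fact: the Gaussian smoothing $u\mapsto\mathbb{E}[g(au+s\epsilon)]$ of any bounded nondecreasing $g$ with total variation $V$ is $aV/(s\sqrt{2\pi})$-Lipschitz. The paper's computation is shorter and gives an explicit formula for $h_t'$.
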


\begin{proof}
Let
$s_t=\sqrt{1-\bar{\alpha}_t}$ and
\[
\ell(z)
=
\log\max(\mathrm{clamp}(z,0,1),\delta).
\]
Then
\[
h_t(u)
=
\mathbb{E}_{\epsilon}
\left[
\ell(\sqrt{\bar{\alpha}_t}u+s_t\epsilon)
\right].
\]
Since $\ell$ is nondecreasing,
$h_t(v)\ge h_t(u)$.
Moreover,
$\ell'(z)=1/z$ on $(\delta,1)$ and $0$ elsewhere. Therefore,
\begin{equation}
h_t'(u)
=
\sqrt{\bar{\alpha}_t}
\int_{\delta}^{1}
\frac{1}{z}
\frac{1}{s_t}
\varphi
\left(
\frac{
z-\sqrt{\bar{\alpha}_t}u
}{
s_t
}
\right)
dz
\le
\frac{
\sqrt{\bar{\alpha}_t}\log(1/\delta)
}{
s_t\sqrt{2\pi}
},
\end{equation}
where $\varphi$ is the standard normal density and
$\varphi\le1/\sqrt{2\pi}$.
\end{proof}

For $\delta=10^{-10}$,
$\log(1/\delta)\approx23.0$,
so the constant is explicit.

\begin{theorem}[Distance to reward-proportional sampling, TSP]
\label{thm:kl_bound}
For TSP with $n$ nodes and $\bar{\alpha}_t<1$, let $P_F^{t}$ be the target in Proposition~\ref{prop:exact_target} and
$\pi_R(\tau)\propto R(\tau)$. Then
\begin{equation}
\label{eq:app_kl}
\mathrm{KL}
\big(
P_F^{t}\,\|\,\pi_R
\big)
\le
nK_t
\le
\frac{
n\sqrt{\bar{\alpha}_t}\log(1/\delta)
}{
\sqrt{2\pi(1-\bar{\alpha}_t)}
}
\xrightarrow[\bar{\alpha}_t\to0]{}
0.
\end{equation}
\end{theorem}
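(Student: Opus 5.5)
The plan is to write $P_F^{t}$ as an exponential tilt of $\pi_R$ and then control the tilt through the Lipschitz bound of Lemma~\ref{lem:lipschitz}.

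By Proposition~\ref{prop:exact_target},
\[
P_F^{t}(\tau)=\frac{\pi_R(\tau)e^{\mu_t(\tau)}}{\mathbb{E}_{\pi_R}[e^{\mu_t}]}.
\]
Hence
\[
\mathrm{KL}(P_F^{t}\,\|\,\pi_R)=\mathbb{E}_{P_F^{t}}[\mu_t]-\log\mathbb{E}_{\pi_R}[e^{\mu_t}].
\]
The right-hand side is unchanged if $\mu_t$ is shifted by a constant. I will therefore center it and set $g(\tau)=\mu_t(\tau)-n\,h_t(0)$.

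Since every TSP tour has exactly $n$ edges, $g(\tau)=\sum_{e\in\tau}\big(h_t(x_0[e])-h_t(0)\big)$. Each summand has $x_0[e]\in(0,1)$, so Lemma~\ref{lem:lipschitz} applied with $u=0$, $v=x_0[e]$ gives
\[
0\le h_t(x_0[e])-h_t(0)\le K_t x_0[e]\le K_t .
\]
Summing over the $n$ edges yields $0\le g(\tau)\le nK_t$ for every tour.

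Now use the centered form:
\[
\mathrm{KL}=\mathbb{E}_{P_F^{t}}[g]-\log\mathbb{E}_{\pi_R}[e^{g}].
\]
The first term is at most $nK_t$. For the second, $g\ge 0$ gives $e^{g}\ge 1$, so $\log\mathbb{E}_{\pi_R}[e^{g}]\ge 0$ and the subtracted term is nonnegative. Together these give $\mathrm{KL}\le nK_t$.

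An alternative route is $\mathrm{KL}\le \log\sup_\tau P_F^{t}/\pi_R\le \sup g-\inf g$, which gives the same bound. The second inequality in the statement is then the explicit estimate on $K_t$ from Lemma~\ref{lem:lipschitz}. The limit as $\bar{\alpha}_t\to0$ follows because the numerator carries $\sqrt{\bar{\alpha}_t}\to0$ while the denominator tends to $\sqrt{2\pi}$.

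The main subtlety is the role of TSP. The uniform edge count $|\tau|=n$ is what makes the centering constant $n h_t(0)$ cancel in the normalizer, and that is what lets the oscillation of $g$, rather than the size of $\mu_t$ itself, control the KL. For CVRP, $|\tau|$ varies with $K(\tau)$, so $\mu_t$ is not uniformly close to a constant and the argument breaks.

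The other point to check is that Lemma~\ref{lem:lipschitz} is used exactly as stated: it assumes $\bar{\alpha}_t<1$ and $0\le u\le v\le 1$, and here $x_0[e]=\Phi(\hat R(e))\in(0,1)$, so both conditions hold.
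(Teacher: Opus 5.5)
Your proof is correct and follows essentially the same route as the paper: you write $\log(P_F^{t}/\pi_R)$ as $\mu_t$ minus a constant, use the uniform edge count $|\tau|=n$ together with Lemma~\ref{lem:lipschitz} to bound its oscillation by $nK_t$, and then use normalization to conclude. The only difference is cosmetic. You center explicitly at $n\,h_t(0)$ and drop the nonnegative term $\log\mathbb{E}_{\pi_R}[e^{g}]$, whereas the paper uses $\mathbb{E}_{\pi_R}[e^{g}]=1$ to get $\inf_\tau g\le 0\le\sup_\tau g$; that is precisely the alternative route you mention.
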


\begin{proof}
Let
\[
g(\tau)
=
\log\frac{P_F^{t}(\tau)}{\pi_R(\tau)}
=
\mu_t(\tau)
-
\log\frac{Z_t}{Z_R},
\qquad
Z_R=\sum_{\tau}R(\tau).
\]
Each TSP tour contains $n$ edges. By Lemma~\ref{lem:lipschitz}, each term
$h_t(x_0[e])$
varies over an interval of length at most $K_t$. Hence,
$\mu_t(\tau)$
varies by at most $nK_t$, and so does $g(\tau)$.

Because both distributions are normalized,
$\mathbb{E}_{\pi_R}[e^g]=1$,
which implies
$\inf_{\tau}g\le0\le\sup_{\tau}g$.
Therefore,
\[
\mathrm{KL}(P_F^{t}\,\|\,\pi_R)
=
\mathbb{E}_{P_F^{t}}[g]
\le
\sup_{\tau}g
\le
nK_t.
\]
\end{proof}

\textbf{Signal correlation.}
Let $\sigma_0^2$ denote the variance of $x_0[e]$ over edges. Since $\epsilon$ is independent of $x_0$,
\[
\mathrm{Cov}(x_t,x_0)
=
\sqrt{\bar{\alpha}_t}\sigma_0^2,
\]
and
\[
\mathrm{Var}(x_t)
=
\bar{\alpha}_t\sigma_0^2+1-\bar{\alpha}_t.
\]
Thus,
\begin{equation}
\rho(t)
=
\sqrt{
\frac{
\bar{\alpha}_t\sigma_0^2
}{
\bar{\alpha}_t\sigma_0^2+1-\bar{\alpha}_t
}
},
\end{equation}
which decreases monotonically from $1$ to $0$.

As $\bar{\alpha}_t\to0$,
$\rho(t)=\Theta(\sqrt{\bar{\alpha}_t})$,
so Eq.~\ref{eq:app_kl} vanishes at rate
$O(n\rho(t))$.
The dependence on $n$ also shows that the bound is mainly informative late in the schedule, when
$\sqrt{\bar{\alpha}_t}\ll1/n$.
Since $\bar{\alpha}_t$ decreases with $t$, the bound tightens throughout training. The edge-dependent reward signal is gradually weakened, and the TSP target converges to reward-proportional sampling at the endpoint. For CVRP, the same argument applies among trajectories with the same number of routes.

\subsection{Trajectories and solutions}
\label{app:multiplicity}

$P_F$ is defined over trajectories rather than solutions. For TSP, each tour has the same trajectory multiplicity---between $1$ and $2n$, depending on the convention for starting node and direction. This common factor cancels, so reward proportionality over trajectories also holds over solutions.

For CVRP, the number of trajectories representing a solution can depend on its route structure. This introduces the same qualification as the factor
$e^{c_TK(\tau)}$
in Remark~\ref{rem:cvrp}; the issue disappears when comparing solutions with the same route structure.

\section{More Experiment Detail}
\subsection{GSA Comparison Results}
\label{sec:GSA Comparison Results}
We validate the behavior that the best outcomes of larger-scale instances often appear at earlier training stages. We evaluate the model on datasets of varying sizes (200, 500, and 1000 nodes), each containing 128 test cases. The model is trained on 100-node instances and evaluated using checkpoints saved at different training epochs, with each epoch comprising 20 training steps. The results are presented in Fig. \ref{fig:origin}. As the instance size increases from 200 to 500 and then to 1000 nodes, the performance peak occurs progressively earlier during training. Moreover, the model's performance degrades significantly after reaching its peak, especially on larger-scale instances.
\begin{figure}[htbp]
  \centering

    \includegraphics[width=0.31\textwidth]{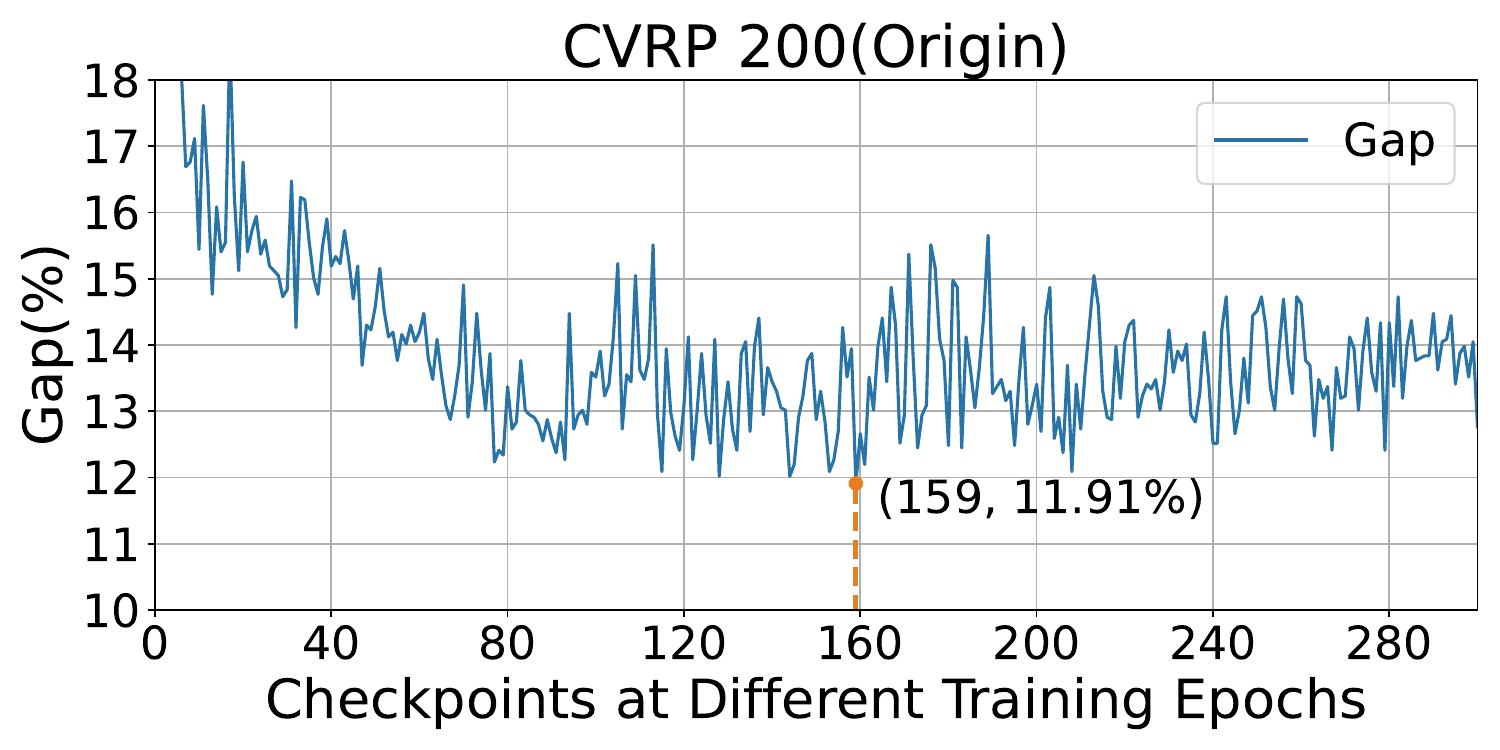}
    \includegraphics[width=0.31\textwidth]{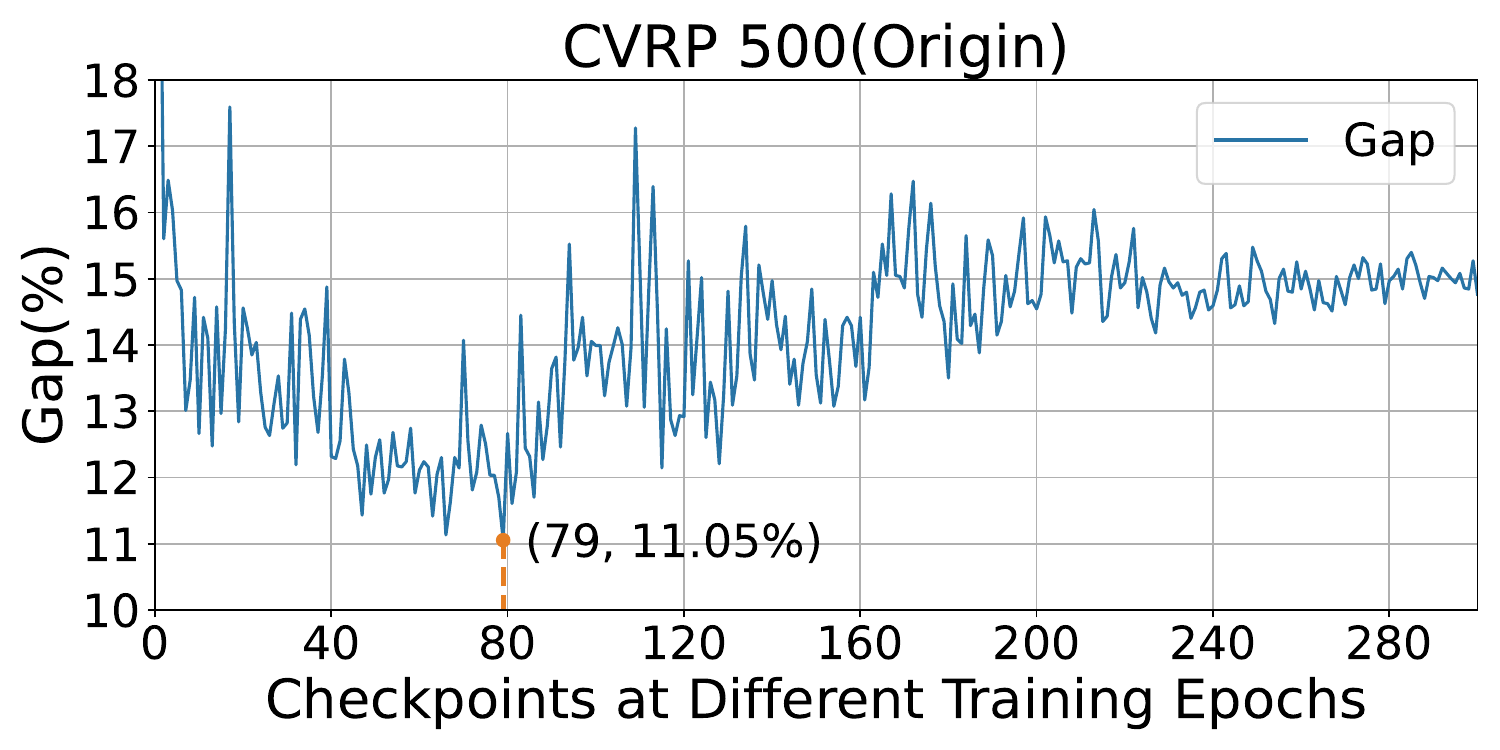}
    \includegraphics[width=0.31\textwidth]{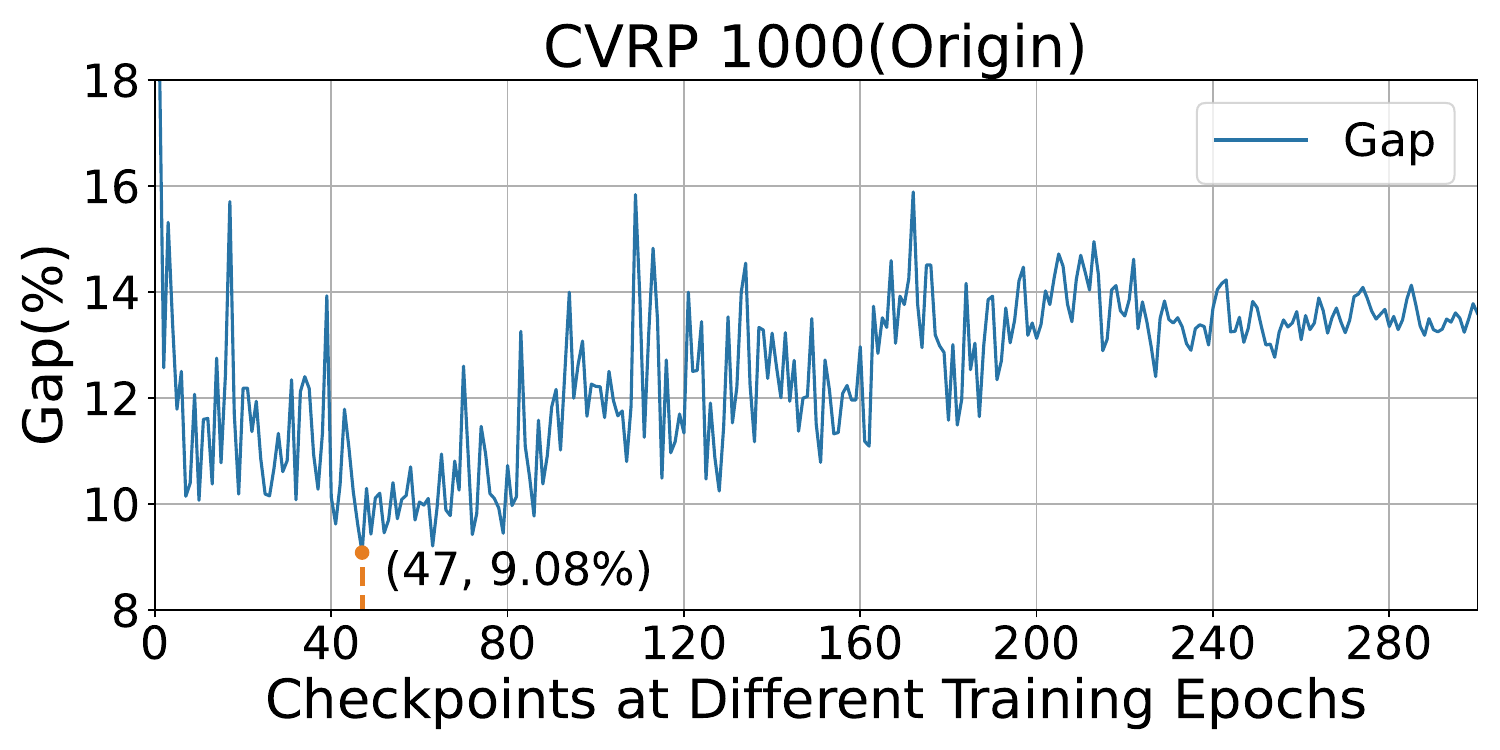}

  \caption{The original performance of DEG test on different scale instance. Gap (\%) is computed relative to LKH, and lower values indicate better results.}
  \label{fig:origin}
\end{figure}

After using GSA, as illustrated in Fig.~\ref{fig:new}, GSA successfully constrains the optimal inference checkpoints across different instance sizes into a consistent interval. Moreover, as training progresses, the model shows a steadily improving performance at later checkpoints—unlike the original decoder, which tends to degrade after early peaks. In addition, GSA achieves overall better performance compared to the baseline setup without GSA. These results indicate that GSA mitigates spatial pattern overfitting to the training instance size and enables the model to derive consistent results across varying instance scales.
\begin{figure}[htbp]
  \centering
    \includegraphics[width=0.31\textwidth]{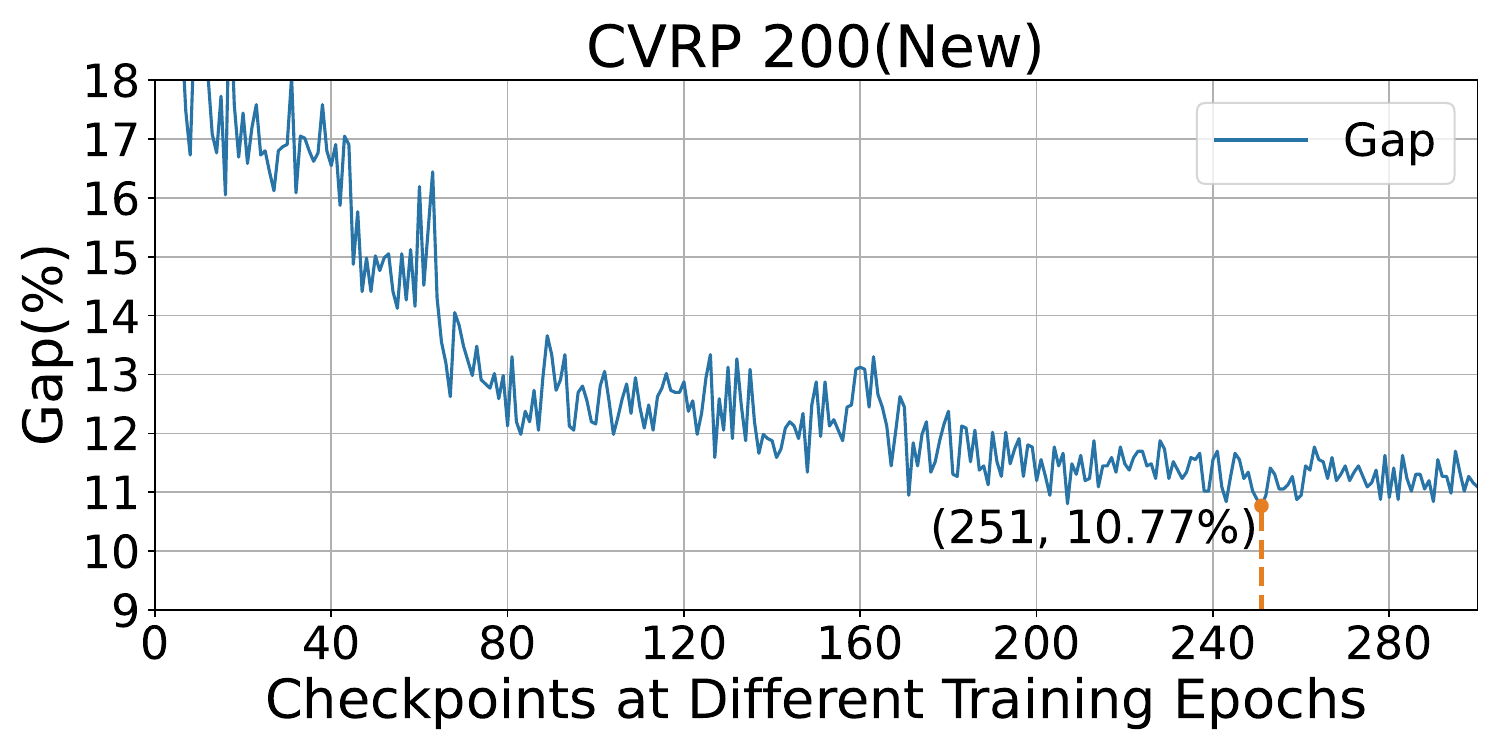}
    \includegraphics[width=0.31\textwidth]{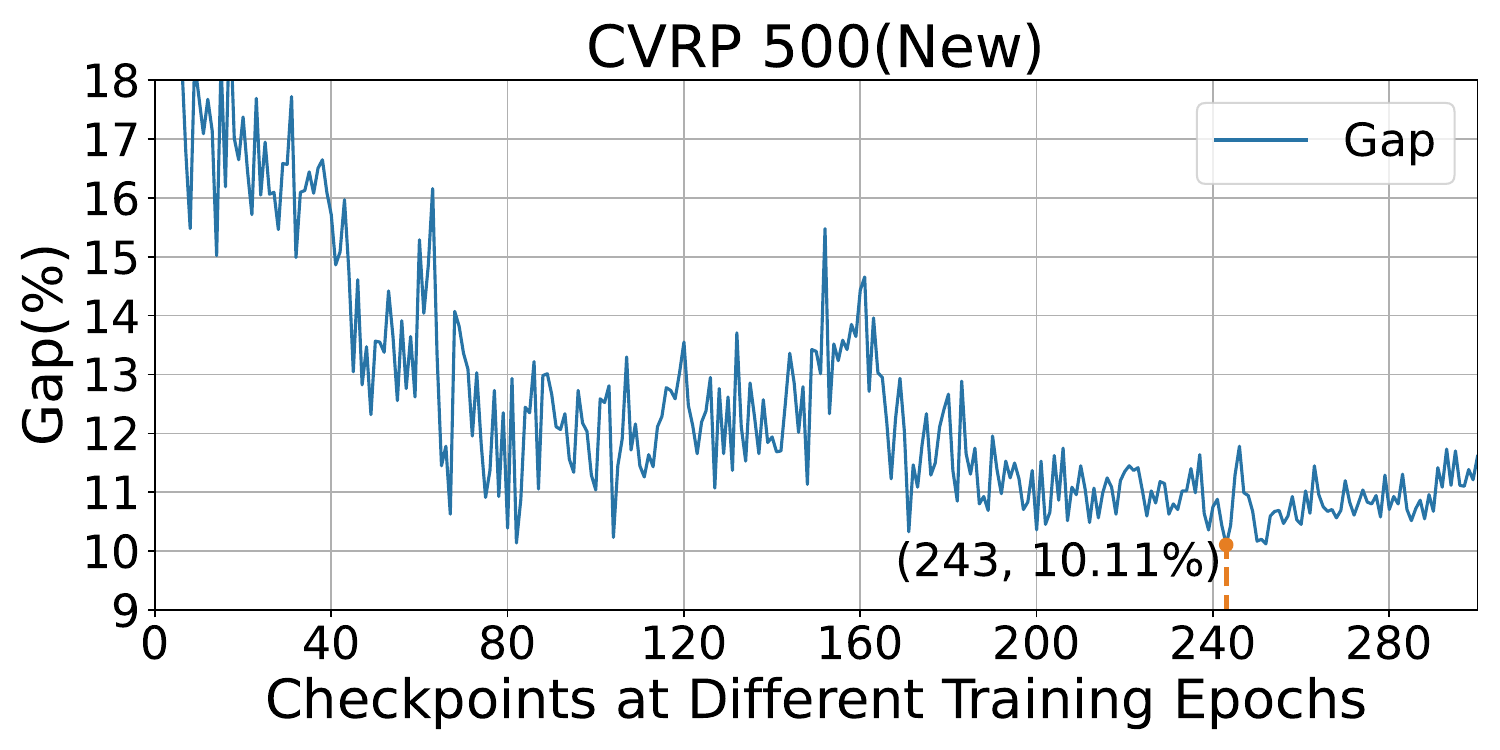}
    \includegraphics[width=0.31\textwidth]{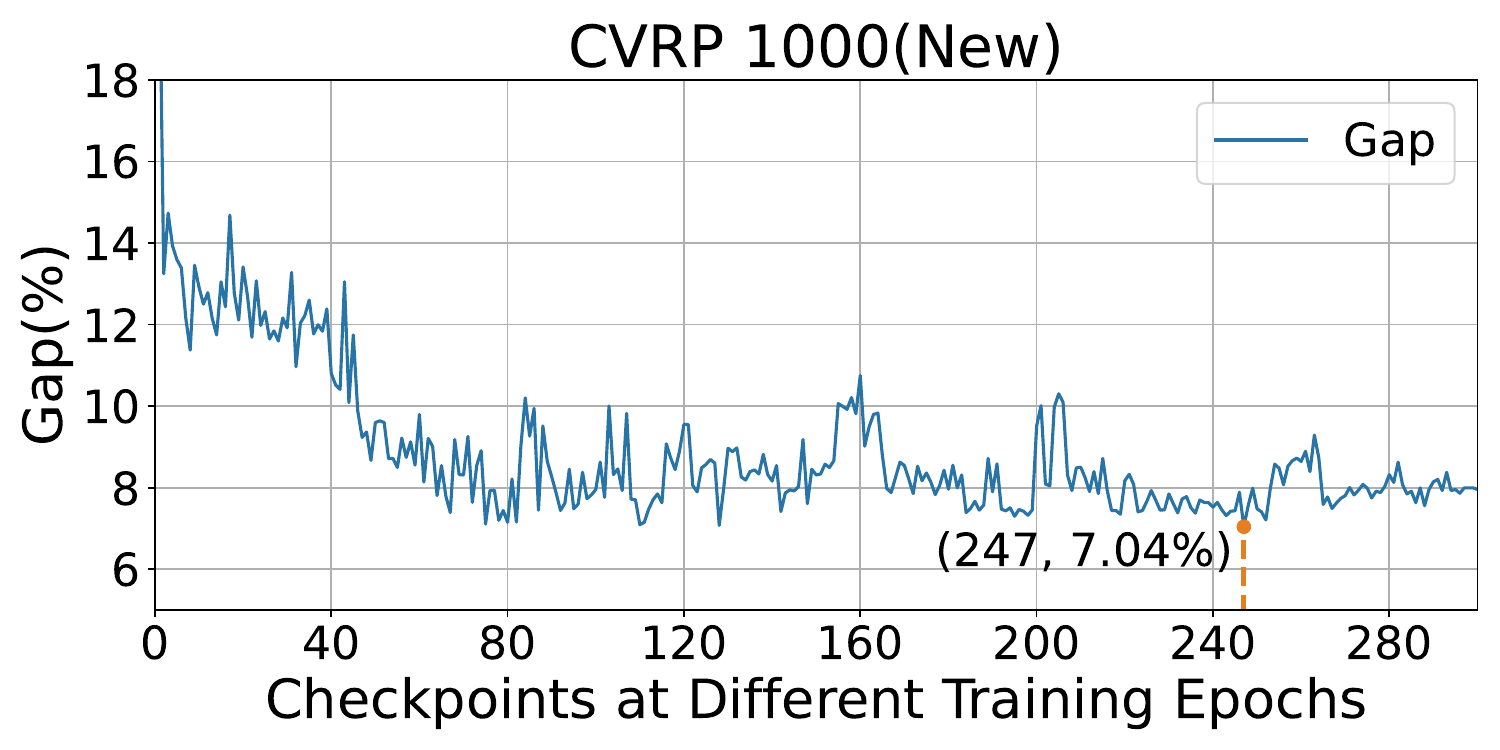}
  \caption{The performance of GSA-based DEG test on different scale instance. Gap (\%) is computed relative to LKH, and lower values indicate better results.}
  \label{fig:new}
\end{figure}

\subsection{TSP comparison on Diffusion-Based Solver with Two-Opt}

\label{Dec:Two-opt}
\begin{table*}[t]
\centering
\caption{Comparison of performance on TSP with varying node sizes with two-opt post processing. Obj. denotes the total route length and Gap (\%) is computed relative to LKH. For all metrics, lower values indicate better results.}

\resizebox{1.0\textwidth}{!}{
\begin{tabular}{l|cc|cc|cc|cc|cc|cc}
\toprule\toprule
\multirow{2}{*}{\textbf{TSP}} & \multicolumn{2}{c|}{\textbf{500}} & \multicolumn{2}{c|}{\textbf{1000}} & \multicolumn{2}{c|}{\textbf{3000}}& \multicolumn{2}{c|}{\textbf{5000}}& \multicolumn{2}{c|}{\textbf{8000}}& \multicolumn{2}{c}{\textbf{10000}} \\
& \textbf{Obj.} & \textbf{Gap(\%)} & \textbf{Obj.}  & \textbf{Gap(\%)} & \textbf{Obj.}  & \textbf{Gap(\%)}& \textbf{Obj.}  & \textbf{Gap(\%)}& \textbf{Obj.}  & \textbf{Gap(\%)}& \textbf{Obj.}  & \textbf{Gap(\%)} \\
\midrule
DIFUSCO (NeurIPS’23) & 17.32 & 4.65 & 24.22 & 4.66&43.41&7.69&--&--&--&--&--&--\\
T2T (NeurIPS’23) & 17.14 & 3.56 & 24.18 & 4.49&42.97 &6.60 &--&--&--&--&--&--\\
Fast T2T (NeurIPS’24) & 17.05 & 3.02 & 24.07 & 4.02&44.06&9.30&--&--&--&--&--&--\\
DEITSP (KDD’25)&16.98&2.60&23.86&3.15&43.10&6.92&--&--&--&--&--&--\\
DEG  & \textbf{16.90} & \textbf{2.11} & \textbf{23.67} & \textbf{2.30}&\textbf{42.22}&\textbf{4.74}&\textbf{54.61}&\textbf{5.34}&\textbf{69.46}& \textbf{5.42}&\textbf{80.05}&\textbf{5.69}\\

\bottomrule\bottomrule
\end{tabular}
}
\begin{flushleft}
\small\textit{``--'' indicates that the model was unable to produce a result for a given instance either within the 90-minute time limit or due to GPU memory constraints.}
\end{flushleft}
\label{tab:tsp_two-opt}
\end{table*}

We compare DEG with existing diffusion-based models, all enhanced with Two-Opt post-processing, and the results are summarized in Table~\ref{tab:tsp_two-opt}. Our method, DEG, achieves an objective value reduction of $2.42\%$, $2.27\%$, and $2.74\%$ on 500-, 1000-, 3000-node instance over DIFUSCO. Compared to Fast-T2T, DEG yields improvements of $0.88\%$, $1.66\%$, and $4.17\%$ on the same instance sizes. Furthermore, DEG outperforms DEITSP by $0.80\%$, $2.04\%$ on 500-, 1000-, 3000-node instances. Notably, DEG is capable of scaling to larger problem sizes, specifically 5,000, 8,000 and 10,000 nodes, whereas other diffusion-based models fail to run due to memory limitations.

\subsection{TSP and CVRP comparison in small-scale problems}
\label{sec:TSP and CVRP comparison in small-scale problems}
\begin{table*}[t]
\centering

\caption{Comparison on small-scale TSP and CVRP instances (100 and 200 Nodes). Obj. denotes the total route length; Gap (\%) is computed relative to LKH; and Time (s) indicates the average time required to solve a single instance. For all metrics, lower values indicate better results.}
\label{tab:smallscale_sidebyside}
\renewcommand{\arraystretch}{1.15}

\begin{minipage}{0.55\textwidth}
\centering
\vspace{1mm}
\resizebox{\linewidth}{!}{
\begin{tabular}{l|ccc|ccc}
\toprule
\multirow{1}{*}{\textbf{TSP}} & \multicolumn{3}{c|}{\textbf{100}} & \multicolumn{3}{c}{\textbf{200}} \\
 & \textbf{Obj.} & \textbf{Time(s)} & \textbf{Gap(\%)} 
             & \textbf{Obj.} & \textbf{Time(s)} & \textbf{Gap(\%)} \\
\midrule
LKH                         & 7.76     & 1.50  & -        & 10.62     & 9.80  & -        \\
GFACS                       & 8.80     & 0.53  & 13.54    & 13.20     & 1.15  & 24.29    \\
AGFN                        & 8.49     & 0.05  & 9.55     & 11.82     & 0.08  & 11.30    \\
DEG                         & \textbf{8.40}     & 0.06  & \textbf{8.25}     & \textbf{11.65}     & 0.10  & \textbf{9.42}     \\
DIFUSCO (w/ LS)             & 7.88     & 0.42  & 1.68     & 10.80     & 1.53  & 1.70     \\
T2T (w/ LS)                 & 7.80     & 0.37  & 0.65     & 10.80     & 1.15  & 1.72     \\
DEITSP (w/ LS)              & \textbf{7.78} & 0.08  & \textbf{0.39}
                                         & 10.76     & 1.20  & 1.32     \\
Fast T2T (w/ LS)            & 7.82     & 0.11  & 0.90     & 10.78     & 1.19  & 1.51     \\
DEG (w/ LS)                 & 7.79     & 0.07  & 0.52     & \textbf{10.76} & 1.15 & \textbf{1.32} \\
\bottomrule
\end{tabular}}
\end{minipage}
\hfill
\begin{minipage}{0.43\textwidth}
\centering
\vspace{1mm}
\resizebox{\linewidth}{!}{
\begin{tabular}{l|ccc|ccc}
\toprule
\multirow{1}{*}{\textbf{CVRP}} & \multicolumn{3}{c|}{\textbf{100}} & \multicolumn{3}{c}{\textbf{200}} \\
 & \textbf{Obj.} & \textbf{Time(s)} & \textbf{Gap(\%)} 
             & \textbf{Obj.} & \textbf{Time(s)} & \textbf{Gap(\%)} \\
\midrule
LKH    & 15.57     & 18.80   & -        & 28.04     & 50.83   & -         \\
GFACS  & 19.26     & 1.90    & 23.70    & 35.36     & 3.14    & 26.11     \\
AGFN   & 17.78     & 0.06    & 14.19    & 31.60     & 0.16    & 12.70     \\
DEG    & \textbf{17.07} & 0.07 & \textbf{9.63}
                 & \textbf{31.00} & 0.18 & \textbf{10.56} \\
\bottomrule
\end{tabular}}
\end{minipage}

\end{table*}

To evaluate the performance of DEG on small-scale problems, we additionally conduct experiments on TSP and CVRP instances of 100  and 200 nodes. The comparisons include both diffusion-based and GFlowNet-based baselines; for CVRP, only GFlowNet-based baselines are reported because diffusion-based solvers do not directly apply to this setting. The results, summarized in Table~\ref{tab:smallscale_sidebyside}, show that DEG consistently surpasses all diffusion-based and GFlowNet-based methods on TSP200, CVRP100, and CVRP200, and achieves competitive or superior performance on TSP100. These findings demonstrate that DEG also maintains strong optimization capability on problems of relatively small scales. 
\begin{table*}[t]
\centering
\renewcommand{\arraystretch}{1.15}

\caption{Training on different scales for TSP and testing without and with GSA. Obj. denotes the total route length, and Gap (\%) is computed relative to LKH. For both metrics, lower is better.}

\resizebox{\textwidth}{!}{
\begin{tabular}{lcccccccccccc}
\toprule
& \multicolumn{2}{c}{\textbf{500}} 
& \multicolumn{2}{c}{\textbf{1000}} 
& \multicolumn{2}{c}{\textbf{3000}} 
& \multicolumn{2}{c}{\textbf{5000}} 
& \multicolumn{2}{c}{\textbf{8000}} 
& \multicolumn{2}{c}{\textbf{10000}}\\

& Obj. & Gap(\%) 
& Obj. & Gap(\%) 
& Obj. & Gap(\%) 
& Obj. & Gap(\%) 
& Obj. & Gap(\%) 
& Obj. & Gap(\%) \\
\midrule
100(w/o GSA)  & 18.54 & 12.02 & 26.74 & 15.56 & 48.45 & 20.19 & 64.97 & 25.33 & -- & -- & -- & -- \\
200(w/o GSA)  & 18.44 & 11.42 & 26.50 & 14.52 & 47.59 & 18.06 & 64.16 & 23.77 & -- & -- & -- & -- \\
500(w/o GSA)  & \textbf{18.05} & \textbf{9.06} & 26.42 & 14.17 & 47.19 & 17.07 & \textbf{61.55} & \textbf{18.73} & -- & -- & -- & -- \\
1000(w/o GSA) & 18.41 & 11.24 & \textbf{25.87} & \textbf{11.80} & \textbf{46.32} & \textbf{14.91} & 63.14 & 21.80 & -- & -- & -- & -- \\
\midrule
100(w/ GSA)  & 18.47 & 11.60 & 26.46 & 14.35 & 47.82 & 18.63 & 63.64 & 22.57 & 83.13 & 26.16 & 94.67 & 24.94 \\
200(w/ GSA)  & 18.33 & 10.76 & 26.34 & 13.83 & 46.91 & 16.37 & 62.16 & 19.91 & 82.19 & 24.74 & 91.09 & 20.20 \\
500(w/ GSA)  & \textbf{18.08} & \textbf{9.24} & 25.97 & 12.23 & 46.05 & \textbf{14.24} & \textbf{61.11} & \textbf{17.88} & \textbf{78.16} & \textbf{18.62} & \textbf{87.44} & \textbf{15.39} \\
1000(w/ GSA) & 18.28 & 10.45 & \textbf{25.87} & \textbf{11.80} & \textbf{45.09} & 14.86 & 62.31 & 20.20 & 79.06 & 19.99 & 89.58 & 18.21 \\

\bottomrule
\end{tabular}

}
\begin{flushleft}
\small\textit{``--'' indicates that the model was unable to produce a result for a given instance either within the 90-minute time limit or due to GPU memory constraints.}
\end{flushleft}
\label{tab:g_train_tsp}
\end{table*}

\begin{table*}[ht]
\centering
\renewcommand{\arraystretch}{1.15}

\caption{Training on different scales for CVRP and testing without and with GSA. Obj. denotes the total route length, and Gap (\%) is computed relative to LKH. For both metrics, lower values indicate better results.}

\resizebox{\textwidth}{!}{
\begin{tabular}{lcccccccccccc}
\toprule
& \multicolumn{2}{c}{\textbf{500}} 
& \multicolumn{2}{c}{\textbf{1000}} 
& \multicolumn{2}{c}{\textbf{3000}} 
& \multicolumn{2}{c}{\textbf{5000}} 
& \multicolumn{2}{c}{\textbf{8000}} 
& \multicolumn{2}{c}{\textbf{10000}} \\

& Obj. & Gap(\%) 
& Obj. & Gap(\%) 
& Obj. & Gap(\%) 
& Obj. & Gap(\%) 
& Obj. & Gap(\%) 
& Obj. & Gap(\%) \\
\midrule
100 (w/o GSA)  & 69.76 & 10.48 & 129.67 & 8.49 & 362.11 & -4.77 & 596.43 & -7.24 & 907.65 & -- & 1125.04 & -- \\
200 (w/o GSA)  & 67.75 & 7.30  & 127.70 & 6.84 & 362.43 & -4.69 & 580.73 & -9.68 & 987.01 & -- & 1094.82 & -- \\
500 (w/o GSA)  & \textbf{66.76} & \textbf{5.73} & 124.70 & 4.34 & 360.80 & -5.12 & 577.24 & -10.22 & 894.98 & -- & 1072.27 & -- \\
1000(w/o GSA)  & 69.68 & 10.36 & \textbf{123.13} & \textbf{3.02} & \textbf{360.42} & \textbf{-5.21} & \textbf{568.73} & \textbf{-11.54} & \textbf{885.18} & -- & \textbf{1060.53} & -- \\
\midrule
100 (w/ GSA)   & 70.86 & 12.23 & 132.65 & 10.74 & 368.99 & -2.96 & --     & --     & --     & --     & --     & -- \\
200 (w/ GSA)   & 68.36 & 8.37  & 129.50 & 8.35  & 368.85 & -3.00 & --     & --     & --     & --     & --     & -- \\
500 (w/ GSA)   & \textbf{66.76} & \textbf{5.73} & 125.65 & 5.13 & 367.13 & -3.45 & --     & --     & --     & --     & --     & -- \\
1000(w/ GSA)   & 70.54 & 11.72 & \textbf{123.13} & \textbf{3.02} & \textbf{366.19} & \textbf{-3.70} & --     & --     & --     & --     & --     & -- \\
\bottomrule
\end{tabular}
}
\begin{flushleft}
\small\textit{``--'' indicates that the model was unable to produce a result for a given instance either within the 90-minute time limit or due to GPU memory constraints.}
\end{flushleft}
\label{tab:g_train_cvrp}
\end{table*}

\subsection{Multiple Scale training}
\label{sec:Multiple Scale training}
We conducted additional experiments where DEG is trained on multiple node sizes. Specifically, for both TSP and CVRP, we trained models on instances of size 100, 200, 500, and 1000, and evaluated them on larger test sets with 500, 1000, 3000, 5000, 8000, and 10000 nodes. For each training scale, we report results with and without GSA rescaling. 

The results (provided in Table~\ref{tab:g_train_tsp} and Table~\ref{tab:g_train_cvrp}) show that DEG maintains strong performance even without GSA, while GSA further strengthens generalization on very large instances. These findings confirm that the scalability of DEG is not dependent on training exclusively at the 100-node scale, and that the model possesses substantial inherent scale robustness.

\subsection{Diversity Test} 

As shown in Table~\ref{tab:diversity}, we evaluate diversity on both TSP and CVRP by comparing the trajectories generated by (i) a reinforcement-learning baseline, (ii) a vanilla GFlowNet implementation (the backbone of our method), and (iii) our proposed DEG model. The diversity metric~\citep{prins2009two} is defined as the average number of edges in one trajectory that do not appear in another, a standard measure of route-level diversity in VRP. In this experiment, all settings are kept identical across models, except for the training paradigm. The results show that DEG consistently achieves higher diversity than standard GFlowNets, demonstrating that our diffusion-enhanced design effectively enriches the exploration space. 
\begin{table*}[t]
\centering

\caption{Diversity comparison on TSP and CVRP,  higher values indicate better results.}
\label{tab:diversity}
\begin{minipage}{0.48\textwidth}
\centering
\resizebox{\textwidth}{!}{
\begin{tabular}{lccccc}
\toprule
\textbf{TSP} & \textbf{100} & \textbf{500} & \textbf{1000} & \textbf{3000} & \textbf{5000} \\
\midrule
RL       & 17.60 & 96.07 & 217.20 & 819.34 & 1516.64 \\
GFlowNet & 20.36 & 126.95 & 294.30 & 1074.53 & 1895.23 \\
DEG      & \textbf{21.79} & \textbf{128.97} & \textbf{302.63} & \textbf{1145.79} & \textbf{2082.36} \\
\bottomrule
\end{tabular}}
\end{minipage}
\hfill
\begin{minipage}{0.48\textwidth}
\centering
\resizebox{\textwidth}{!}{
\begin{tabular}{lccccc}
\toprule
\textbf{CVRP} & \textbf{100} & \textbf{500} & \textbf{1000} & \textbf{3000} & \textbf{5000} \\
\midrule
RL       & 16.57 & 181.36 & 400.71 & 1210.16 & 2372.48 \\
GFlowNet & 19.33 & 212.27 & 443.73 & 1526.56 & 2705.91 \\
DEG      & \textbf{20.54} & \textbf{217.02} & \textbf{471.68} & \textbf{1648.82} & \textbf{2909.12} \\
\bottomrule
\end{tabular}}
\end{minipage}

\end{table*}

\subsection{Learning Efficiency} 
\begin{table*}[t]
\centering

\caption{Comparison of training efficiency on TSP and CVRP. Obj. denotes the total route length, and Gap (\%) is computed relative to LKH. For both metrics, lower values indicate better results.}
\label{tab:training_efficiency}
\renewcommand{\arraystretch}{1.15}

\begin{minipage}{0.49\textwidth}
\centering
\resizebox{\linewidth}{!}{
\begin{tabular}{l|cc|cc|cc}
\toprule
\multirow{1}{*}{TSP} & \multicolumn{2}{c|}{\textbf{200}} & \multicolumn{2}{c|}{\textbf{500}} & \multicolumn{2}{c}{\textbf{1000}} \\
& Obj. & Gap(\%) & Obj. & Gap(\%) & Obj. & Gap(\%) \\
\midrule
RL & 12.12 & 14.12 & 19.28 & 16.50 & 27.82 & 20.22 \\
GFlowNet      & 12.08 & 13.75 & 19.02 & 14.92 & 27.29 & 17.93 \\
PO                     & 11.97 & 12.71 & 18.93 & 14.38 & 27.11 & 17.16 \\
DEG                    & \textbf{11.91} & \textbf{12.15} & \textbf{18.79} & \textbf{13.53} & \textbf{26.80} & \textbf{15.82} \\
\bottomrule
\end{tabular}}
\end{minipage}
\hfill
\begin{minipage}{0.49\textwidth}
\centering
\vspace{1mm}
\resizebox{\linewidth}{!}{
\begin{tabular}{l|cc|cc|cc}
\toprule
\multirow{1}{*}{CVRP} & \multicolumn{2}{c|}{\textbf{200}} & \multicolumn{2}{c|}{\textbf{500}} & \multicolumn{2}{c}{\textbf{1000}} \\
& Obj. & Gap(\%) & Obj. & Gap(\%) & Obj. & Gap(\%) \\
\midrule
RL & 32.94 & 17.48 & 72.50 & 14.82 & 133.41 & 11.62 \\
GFlowNet      & 32.03 & 14.23 & 71.49 & 13.22 & 132.47 & 10.84 \\
PO                     & 32.13 & 14.59 & 71.77 & 13.67 & 133.39 & 11.60 \\
DEG                    & \textbf{31.90} & \textbf{13.77} & \textbf{70.15} & \textbf{11.10} & \textbf{131.23} & \textbf{9.80} \\
\bottomrule
\end{tabular}}
\end{minipage}

\end{table*}

We conducted additional experiments on a fixed problem scale and distribution to assess learning efficiency under controlled settings. Specifically, we used 10{,}000 uniformly sampled TSP and CVRP instances with size $n=200$ and trained all models with identical configurations, differing only in the training paradigm (RL, standard GFlowNet, PO~\citep{panpreference}, and our DEG). The trained models were then evaluated on TSP/CVRP test instances at $n=200$, $n=500$, and $n=1000$. The results, reported in Appendix Table~\ref{tab:training_efficiency}, show that DEG exhibits strong optimization performance and superior learning efficiency compared to these baselines under matched training conditions.

\subsection{Applicability to Other Combinatorial Optimization Problems}
\label{sec:Application to Other Combinatorial Optimization Problems}
\begin{table*}[ht]
\centering

\caption{Results on SMTWTP and BPP benchmarks. For SMTWTP, Obj. denotes total weighted tardiness (lower is better); for BPP, Obj. denotes space utilization rate (higher is better). Gap (\%) is computed relative to LKH, where lower values indicate better performance.}
\label{tab:general_CO}
\renewcommand{\arraystretch}{1.15}
\begin{minipage}{0.49\textwidth}
\centering
\resizebox{\linewidth}{!}{
\begin{tabular}{l|cc|cc|cc}
\toprule
 SMTWTP& \multicolumn{2}{c|}{\textbf{100}} 
 & \multicolumn{2}{c|}{\textbf{200}} 
 & \multicolumn{2}{c}{\textbf{500}}\\
 & \textbf{Obj.}$\downarrow$ & \textbf{Time(s)} 
 & \textbf{Obj.}$\downarrow$ & \textbf{Time(s)} 
 & \textbf{Obj.}$\downarrow$ & \textbf{Time(s)} \\
\midrule
ACO     & 361.22 & 0.82 & 1608.26 & 1.70 & 11015.39 & 2.84 \\
DeepACO & 223.36 & 0.93 & 905.19  & 1.81 & 1567.20  & 3.56 \\
GFACS   & 40.67  & 1.04 & 119.32  & 2.25 & 559.31   & 3.68 \\
DEG     & \textbf{37.70} & \textbf{0.14}
        & \textbf{103.13} & \textbf{0.34}
        & \textbf{510.67} & \textbf{0.98} \\
\bottomrule
\end{tabular}}
\end{minipage}
\hfill
\begin{minipage}{0.49\textwidth}
\centering
\resizebox{\linewidth}{!}{
\begin{tabular}{l|cc|cc|cc}
\toprule
 BPP& \multicolumn{2}{c|}{\textbf{120}} 
 & \multicolumn{2}{c|}{\textbf{250}} 
 & \multicolumn{2}{c}{\textbf{500}}\\
 & \textbf{Obj.}$\uparrow$ & \textbf{Time(s)} 
 & \textbf{Obj.}$\uparrow$ & \textbf{Time(s)} 
 & \textbf{Obj.}$\uparrow$ & \textbf{Time(s)} \\
\midrule
ACO     & 90.65 & 2.95 & 90.36 & 5.85 & 90.17 & 10.00 \\
DeepACO & 93.46 & 2.37 & 93.10 & 5.50 & 93.08 & 10.75 \\
GFACS   & 94.68 & 2.38 & 94.03 & 5.56 & 93.65 & 10.77 \\
DEG     & \textbf{96.94} & \textbf{0.27}
        & \textbf{96.30} & \textbf{0.58}
        & \textbf{95.37} & \textbf{1.38} \\
\bottomrule
\end{tabular}}
\end{minipage}

\end{table*}

To further assess the generalization ability of our proposed DEG beyond routing problems, we extended our experiments to additional combinatorial optimization domains. Specifically, we considered: (1) SMTWTP (Single-Machine Total Weighted Tardiness Problem), a classical single-machine scheduling problem where the objective is to minimize the total weighted tardiness of jobs processed on one machine. (2) BPP (Bin Packing Problem), a packing optimization problem that aims to minimize the number of bins required to accommodate items of varying sizes under capacity constraints, thereby improving space utilization. In Appendix Table~\ref{tab:general_CO}, regarding SMTWTP, the instance sizes (100, 200, and 500) correspond to the number of jobs, with Obj. denoting total weighted tardiness. Regarding BPP, the instance sizes (120, 250, and 500) correspond to the number of items, with Obj. (\%) denoting space utilization rate, i.e., the proportion of total bin capacity effectively used by the packed items. As suggested by those results, the proposed DEG model outperforms both DeepACO~\citep{ye2023deepaco} and GFACS~\cite{kimant} on these tasks, demonstrating its potential as a general neural combinatorial optimization framework beyond routing.

\section{Detailed Introduction of GFlowNet}
The Generative Flow Network (GFlowNet) is probabilistic generative model that treats the process of sampling from a target distribution as a sequential decision-making problem~\citep{bengio2021flow,bengio2023gflownet}. 
Formally, a GFlowNet operates on a fully-observed, deterministic Markov Decision Process (MDP) with a set of states $\mathcal{S}$, a set of actions $\mathcal{A} \subseteq \mathcal{S} \times \mathcal{S}$, and a designated initial state $s_{0}$. 
Certain states are terminal and have no outgoing actions; we denote the set of terminal states by $\mathcal{X}$. 
Each terminal state $x \in \mathcal{X}$ is reachable from $s_{0}$ through at least one (not necessarily unique) sequence of actions.

A complete trajectory is defined as $\tau = (s_{0} \to s_{1} \to \cdots \to s_{n})$, where $s_{n} \in \mathcal{X}$ and every transition satisfies $(s_{i}, s_{i+1}) \in \mathcal{A}$. 
A \emph{forward policy} $P_{F}(s' \mid s)$ defines the probability distribution over next states from $s$. 
It induces a distribution over trajectories:
\[
    P_{F}(\tau) = P_{F}(s_{0} \to s_{1} \to \cdots \to s_{n}) 
    = \prod_{i=0}^{n-1} P_{F}(s_{i+1} \mid s_{i}).
\]

Each terminal state $x \in \mathcal{X}$ is associated with a reward $R(x) > 0$, interpreted as an unnormalized probability mass. 
The learning objective of a GFlowNet is to produce a policy $P_{F}$ whose induced marginal distribution over terminal states satisfies
\[
    P_{F}(x) \propto R(x).
\]
In other words, a GFlowNet learns to sample terminal objects with probability proportional to their reward.

\paragraph{Trajectory Balance (TB).}
The trajectory balance objective~\citep{malkin2022trajectory} introduces a scalar parameter $Z_{\theta}$ (an estimate of the partition function) and a backward policy $P_{B}$. 
For a complete trajectory $\tau = (s_{0} \to \cdots \to s_{n} = x)$, the TB loss is
\[
    \ell_{\mathrm{TB}}(\tau; \theta) 
    = \left(
        \log \frac{
            Z_{\theta} 
            \prod_{i=0}^{n-1} P_{F}(s_{i+1}\mid s_{i};\theta)
        }{
            R(x)
            \prod_{i=0}^{n-1} P_{B}(s_{i}\mid s_{i+1};\theta)
        }
    \right)^{2}.
\]
If the TB loss is driven to zero for all trajectories, the forward policy satisfies the desired reward-proportional condition. 
In practice, GFlowNets predict the log-probabilities of both forward and backward transitions, as well as the log-flow $F(s)$ and the log-partition constant $\log Z$.

\paragraph{Training and Exploration.}
TB losses depend on transitions or full trajectories, but do not prescribe how trajectories should be chosen. 
A common approach is on-policy training, where trajectories are sampled based on the current forward policy $P_{F}$ (which assigns probabilities to edges) and backward policy $P_{B}$ (which determines either exploration or exploitation during training), and the TB objective is minimized through gradient descent.

\paragraph{Conditional GFlowNets.}
In many applications, including combinatorial optimization, the reward and admissible actions depend on an input instance.
Conditional GFlowNets extend the formulation so that $P_{F}$ and $P_{B}$ are conditioned on input features, enabling generalization to unseen instances and supporting tasks where solutions correspond to variable-sized or structured objects.

\section{Broad Impact and Safeguards.}
Diffusion-enhanced GFlowNet for vehicle routing problems has the potential to improve the efficiency of logistics and transportation systems, reducing operational costs, fuel consumption, and environmental impact. However, applying such approaches in real-world settings requires careful testing and validation. To support reliable use, we emphasize thorough evaluation across diverse scenarios, clearly document model assumptions and limitations, and recommend combining learning-based methods with classical optimization or constraint-checking mechanisms when appropriate. Domain-specific validation is strongly encouraged to ensure responsible deployment.
\end{document}